\documentclass{article}
\usepackage[T1]{fontenc}
\usepackage[utf8]{inputenc}
\usepackage{amsmath}
\usepackage{amsthm}
\usepackage{graphicx}
\usepackage[unicode=true,
 bookmarks=false,
 breaklinks=false,pdfborder={0 0 1},backref=section,colorlinks=false]
 {hyperref}
\usepackage{breakurl}
\usepackage[numbers,sort&compress]{natbib}

\makeatletter
\numberwithin{equation}{section}
\numberwithin{figure}{section}
\theoremstyle{plain}
\newtheorem{thm}{\protect\theoremname}
  \theoremstyle{plain}
  \newtheorem{lem}[thm]{\protect\lemmaname}
  \theoremstyle{plain}
  \newtheorem{fact}[thm]{\protect\factname}
  \theoremstyle{plain}
  \newtheorem{cor}[thm]{\protect\corollaryname}
  \theoremstyle{plain}
  \newtheorem*{lem*}{\protect\lemmaname}
  \theoremstyle{plain}
  \newtheorem*{thm*}{\protect\theoremname}


 \usepackage[final]{nips_2016_modified}

\usepackage[utf8]{inputenc} 
\usepackage[T1]{fontenc}    
\usepackage{hyperref}       
\usepackage{url}            
\usepackage{booktabs}       
\usepackage{amsfonts}       
\usepackage{amsmath}
\usepackage{amssymb}
\usepackage{nicefrac}       
\usepackage{microtype}      

%

\author{
  Daniel Soudry \\
  Department of Statistics \\
  Columbia University \\
  New York, NY 10027, USA\\
  \texttt{daniel.soudry@gmail.com} \\
\And
   Yair Carmon \\
   Department of Electrical Engineering \\
   Stanford University \\
   Stanford, CA 94305, USA \\
  \texttt{yairc@stanford.edu} \\
}

\makeatother

  \providecommand{\corollaryname}{Corollary}
  \providecommand{\factname}{Fact}
  \providecommand{\lemmaname}{Lemma}
  \providecommand{\theoremname}{Theorem}
\providecommand{\theoremname}{Theorem}

\begin{document}
\global\long\def\E{\hat{\mathbb{E}}}
\global\long\def\wb{{\mathbf{w}}}
\global\long\def\eps{{\varepsilon}}
\global\long\def\dS{{\delta_{S}}}

\title{No bad local minima: Data independent training error guarantees for
multilayer neural networks }
\maketitle
\begin{abstract}
We use smoothed analysis techniques to provide guarantees on the training
loss of Multilayer Neural Networks (MNNs) at differentiable local
minima. Specifically, we examine MNNs with piecewise linear activation
functions, quadratic loss and a single output, under mild over-parametrization.
We prove that for a MNN with one hidden layer, the training error
is zero at every differentiable local minimum, for almost every dataset
and dropout-like noise realization. We then extend these results to
the case of more than one hidden layer. Our theoretical guarantees
assume essentially nothing on the training data, and are verified
numerically. These results suggest why the highly non-convex loss
of such MNNs can be easily optimized using local updates (e.g., stochastic
gradient descent), as observed empirically.
\end{abstract}

\section{Introduction}

Multilayer Neural Networks (MNNs) have achieved state-of-the-art performances
in many areas of machine learning \cite{LeCun2015}. This success
is typically achieved by training complicated models, using a simple
stochastic gradient descent (SGD) method, or one of its variants.
However, SGD is only guaranteed to converge to critical points in
which the gradient of the expected loss is zero \cite{Bottou1998},
and, specifically, to stable local minima \cite{Pemantle1990} (this
is true also for regular gradient descent \cite{Lee2016}). Since
loss functions parametrized by MNN weights are non-convex, it has
long been a mystery why does SGD work well, rather than converging
to ``bad'' local minima, where the training error is high (and thus
also the test error is high). 

Previous results (section \ref{sec:Related-work}) suggest that the
training error at all local minima should be low, if the MNNs have
extremely wide layers. However, such wide MNNs would also have an
extremely large number of parameters, and serious overfitting issues.
Moreover, current state of the art results are typically achieved
by deep MNNs \cite{Krizhevsky2012,He2015}, rather then wide. Therefore,
we are interested to provide training error guarantees at a more practical
number of parameters.

As a common rule-of-the-thumb, a multilayer neural network should
have at least as many parameters as training samples, and use regularization,
such as dropout \cite{Hinton2012} to reduce overfitting. For example,
Alexnet \cite{Krizhevsky2012} had 60 million parameters and was trained
using 1.2 million examples. Such over-parametrization regime continues
in more recent works, which achieve state-of-the-art performance with
very deep networks \cite{He2015}. These networks are typically under-fitting
\cite{He2015}, which suggests that the training error is the main
bottleneck in further improving performance.

In this work we focus on MNNs with a single output and leaky rectified
linear units. We provide a guarantee that the training error is zero
in every differentiable local minimum (DLM), under mild over-parametrization,
and essentially for every data set. With one hidden layer (Theorem
\ref{thm: committee machine}) we show that the training error is
zero in all DLMs, whenever the number of weights in the first layer
is larger then the number of samples $N$, \emph{i.e.}, when $N\leq d_{0}d_{1}$,
where $d_{l}$ is the width of the activation $l$-th layer. For MNNs
with $L\geq3$ layers we show that, if $N\leq d_{L-2}d_{L-1}$, then
convergence to potentially bad DLMs (in which the training error is
not zero) can be averted by using a small perturbation to the MNN's
weights and then fixing all the weights except the last two weight
layers (Corollary \ref{cor: MNN instability}).

A key aspect of our approach is the presence of a multiplicative dropout-like
noise term in our MNNs model. We formalize the notion of validity
for essentially every dataset by showing that our results hold almost
everywhere with respect to the Lebesgue measure over the data and
this noise term. This approach is commonly used in smoothed analysis
of algorithms, and often affords great improvements over worst-case
guarantees (\emph{e.g.}, \cite{Spielman2009}). Intuitively, there
may be some rare cases where our results do not hold, but almost any
infinitesimal perturbation of the input and activation functions will
fix this. Thus, our results assume essentially no structure on the
input data, and are unique in that sense.

\section{Related work\label{sec:Related-work}}

At first, it may seem hopeless to find any training error guarantee
for MNNs. Since the loss of MNNs is highly non-convex, with multiple
local minima \cite{Fukumizu2000}, it seems reasonable that optimization
with SGD would get stuck at some bad local minimum. Moreover, many
theoretical hardness results (reviewed in \cite{Sima2002}) have been
proven for MNNs with one hidden layer. 

Despite these results, one can easily achieve zero training error
\cite{Baum1988,Nilsson1965}, if the MNN's last hidden layer has more
units than training samples ($d_{l}\geq N$). This case is not very
useful, since it results in a huge number of weights (larger than
$d_{l-1}N$ ), leading to strong over-fitting. However, such wide
networks are easy to optimize, since by training the last layer we
get to a global minimum (zero training error) from almost every random
initialization \cite{Huang2006,Livni2014,Haeffele2015}. 

Qualitatively similar training dynamics are observed also in more
standard (narrower) MNNs. Specifically, the training error usually
descends on a single smooth slope path with no ``barriers''\cite{Goodfellow2014},
and the training error at local minima seems to be similar to the
error at the global minimum \cite{Dauphin2014}. The latter was explained
in \cite{Dauphin2014} by an analogy with high-dimensional random
Gaussian functions, in which any critical point high above the global
minimum has a low probability to be a local minimum. A different explanation
to the same phenomenon was suggested by \cite{Choromanska2014}. There,
a MNN was mapped to a spin-glass Ising model, in which all local minima
are limited to a finite band above the global minimum. 

However, it is not yet clear how relevant these statistical mechanics
results are for actual MNNs and realistic datasets. First, the analogy
in \cite{Dauphin2014} is qualitative, and the mapping in \cite{Choromanska2014}
requires several implausible assumptions (\emph{e.g.}, independence
of inputs and targets). Second, such statistical mechanics results
become exact in the limit of infinite parameters, so for a finite
number of layers, each layer should be infinitely wide. However, extremely
wide networks may have serious over-fitting issues, as we explained
before.

Previous works have shown that, given several limiting assumptions
on the dataset, it is possible to get a low training error on a MNN
with one hidden layer: \cite{Gori1992} proved convergences for linearly
separable datasets; \cite{Safran2015} either required that $d_{0}>N$,
or clustering of the classes. Going beyond training error, \cite{Andoni2014}
showed that MNNs with one hidden layer can learn low order polynomials,
under a product of Gaussians distributional assumption on the input.
Also, \cite{Janzamin2015} devised a tensor method, instead of the
standard SGD method, for which MNNs with one hidden layer are guaranteed
to approximate arbitrary functions. Note, however, the last two works
require a rather large $N$ to get good guarantees.

\section{Preliminaries \label{sec:Preliminaries}}

\paragraph{Model.}

We examine a Multilayer Neural Network (MNN) optimized on a finite
training set $\left\{ \mathbf{x}^{\left(n\right)},y^{\left(n\right)}\right\} _{n=1}^{N}$,
where $\mathbf{X}\triangleq\left[\mathbf{x}^{\left(1\right)},\dots,\mathbf{x}^{\left(N\right)}\right]\in\mathbb{R}^{d_{0}\times N}$
are the input patterns, $\left[y^{\left(1\right)},\dots,y^{\left(N\right)}\right]\in\mathbb{R}^{1\times N}$
are the target outputs (for simplicity we assume a scalar output),
and $N$ is the number of samples. The MNN has $L$ layers, in which
the layer inputs $\mathbf{u}_{l}^{\left(n\right)}\in\mathbb{R}^{d_{l}}$
and outputs $\mathbf{v}_{l}^{\left(n\right)}\in\mathbb{R}^{d_{l}}$
(a component of $\mathbf{v}_{l}$ is denoted $v_{i,l}$) are given
by
\begin{equation}
\forall n,\forall l\text{\ensuremath{\geq}}1:\,\begin{array}{c}
\mathbf{u}_{l}^{\left(n\right)}\triangleq\mathbf{W}_{l}\mathbf{v}_{l-1}^{\left(n\right)}\,;\,\mathbf{v}_{l}^{\left(n\right)}\triangleq\mathrm{diag}\left(\boldsymbol{a}_{l}^{\left(n\right)}\right)\mathbf{u}_{l}^{\left(n\right)}\,\end{array}\label{eq: MNN}
\end{equation}
where \textbf{$\mathbf{v}_{0}^{\left(n\right)}=\mathbf{x}^{\left(n\right)}$}
is the input of the network, $\mathbf{W}_{l}\in\mathbb{R}^{d_{l}\times d_{l-1}}$
are the weight matrices (a component of $\mathbf{W}_{l}$ is denoted
$W_{ij,l}$, bias terms are ignored for simplicity), and $\boldsymbol{a}_{l}^{\left(n\right)}=\boldsymbol{a}_{l}^{\left(n\right)}(\mathbf{u}_{l}^{\left(n\right)})$
are piecewise constant activation slopes defined below. We set $\mathbf{A}_{l}\triangleq[\boldsymbol{a}_{l}^{\left(1\right)},\dots,\boldsymbol{a}_{l}^{\left(N\right)}]$.

\paragraph{Activations.}

Many commonly used piecewise-linear activation functions (\emph{e.g.},\emph{
}rectified linear unit, maxout, max-pooling) can be written in the
matrix product form in eq. \eqref{eq: MNN}. We consider the following
relationship:
\[
\forall n:\,a_{L}^{\left(n\right)}=1,\,\forall l\leq L-1\,:\,a_{i,l}^{\left(n\right)}(\mathbf{u}_{l}^{\left(n\right)})\triangleq\epsilon_{i,l}^{\left(n\right)}\cdot\begin{cases}
1 & ,\,\mathrm{if}\,u_{i,l}^{\left(n\right)}\geq0\\
s & ,\,\mathrm{if}\,u_{i,l}^{\left(n\right)}<0
\end{cases}\,\,.
\]
When $\boldsymbol{\mathcal{E}}{}_{l}\triangleq[\boldsymbol{\epsilon}_{l}^{\left(1\right)},\dots,\mathbf{\boldsymbol{\epsilon}}_{l}^{\left(N\right)}]=\mathbf{1}$
we recover the common leaky rectified linear unit (leaky ReLU) nonlinearity,
with some fixed slope $s\neq0$. The matrix $\boldsymbol{\mathcal{E}}{}_{l}$
can be viewed as a realization of dropout noise — in most implementations
$\epsilon_{i,l}^{\left(n\right)}$ is distributed on a discrete set
(\emph{e.g.}, $\{0,1\}$), but competitive performance is obtained
with continuous distributions (\emph{e.g.} Gaussian) \cite{Srivastava2014a,Xu2015}.
Our results apply directly to the latter case. The inclusion of $\boldsymbol{\mathcal{E}}{}_{l}$
is the innovative part of our model — by performing smoothed analysis
jointly on $\mathbf{{\bf X}}$ and $\left(\boldsymbol{\mathcal{E}}_{1},...,\boldsymbol{\mathcal{E}}_{L-1}\right)$
we are able to derive strong training error guarantees. However, our
use of dropout is purely a proof strategy; we never expect dropout
to reduce the training error in realistic datasets. This is further
discussed in sections \ref{sec:Numerical-Experiments} and \ref{sec:Discussion}.

\paragraph{Measure-theoretic terminology}

Throughout the paper, we make extensive use of the term $\left(\mathbf{C}_{1},....,\mathbf{C}_{k}\right)$-almost
everywhere, or a.e. for short. This is taken to mean, almost everywhere
with respect of the Lebesgue measure on all of the entries of $\mathbf{C}_{1},....,\mathbf{C}_{k}$.
A property hold a.e. with respect to some measure, if the set of objects
for which it doesn't hold has measure 0. In particular, our results
hold with probability 1 whenever $\left(\boldsymbol{\mathcal{E}}_{1},...,\boldsymbol{\mathcal{E}}_{L-1}\right)$
is taken to have i.i.d. Gaussian entries, and arbitrarily small Gaussian
i.i.d. noise is used to smooth the input ${\bf X}$.

\paragraph{Loss function. }

We denote $e\triangleq v_{L}-y$ as the output error, where $v_{L}$
is output of the neural network with $\mathbf{v}_{0}=\mathbf{x}$,
\textbf{$\mathrm{\boldsymbol{e}}=\left[e^{\left(1\right)},\dots,e^{\left(N\right)}\right]$},
and $\E$ as the empirical expectation over the training samples.
We use the mean square error, which can be written as one of the following
forms
\begin{equation}
\mathrm{MSE}\triangleq\frac{1}{2}\E e^{2}=\frac{1}{2N}\sum_{n=1}^{N}\left(e^{\left(n\right)}\right)^{2}=\frac{1}{2N}\left\Vert \mathbf{e}\right\Vert ^{2},\label{eq: MSE}
\end{equation}
 The loss function depends on $\mathbf{X}$, $\left(\boldsymbol{\mathcal{E}}_{1},...,\boldsymbol{\mathcal{E}}_{L-1}\right)$,
and on the entire weight vector $\mathbf{w}\triangleq\left[\mathbf{w}_{1}^{\top},\dots,\mathbf{w}_{L}^{\top}\right]^{\top}\in\mathbb{R}^{\omega}$,
where $\mathbf{w}_{l}\triangleq\mathrm{vec}\left(\mathbf{W}_{l}\right)$
is the flattened weight matrix of layer $l$, and $\omega=\sum_{l=1}^{L}d_{l-1}d_{l}$
is total number of weights. 

\section{Single Hidden layer \label{sec: Single hidden layer}}

MNNs are typically trained by minimizing the loss over the training
set, using Stochastic Gradient Descent (SGD), or one of its variants
(\emph{e.g.}, ADAM \cite{Kingma2015}). In this section and the next,
we guarantee zero training loss in the common case of an over-parametrized
MNN. We do this by analyzing the properties of differentiable local
minima (DLMs) of the MSE (eq. \eqref{eq: MSE}). We focus on DLMs,
since under rather mild conditions \cite{Pemantle1990,Bottou1998},
SGD asymptotically converges to DLMs of the loss (for finite $N$,
a point can be non-differentiable only if $\exists i,l,n\text{ such that }u_{i,l}^{(n)}=0$). 

We first consider a MNN with one hidden layer $\left(L=2\right)$.
We start by examining the MSE at a DLM 
\begin{equation}
\frac{1}{2}\E e^{2}=\frac{1}{2}\E\left(y-\mathbf{W}_{2}\mathrm{diag}\left(\boldsymbol{a}_{1}\right)\mathbf{W}_{1}\mathbf{x}\right)^{2}=\frac{1}{2}\E\left(y-\boldsymbol{a}_{1}^{\top}\mathrm{diag}\left(\mathbf{w}_{2}\right)\mathbf{W}_{1}\mathbf{x}\right)^{2}\,.\label{eq:MSE 2-layer}
\end{equation}
To simplify notation, we absorb the redundant parameterization of
the weights of the second layer into the first $\tilde{\mathbf{W}}_{1}=\mathrm{diag}\left(\mathbf{w}_{2}\right)\mathbf{W}_{1}$,
obtaining
\begin{equation}
\frac{1}{2}\E e^{2}=\frac{1}{2}\E\left(y-\boldsymbol{a}_{1}^{\top}\tilde{\mathbf{W}}_{1}\mathbf{x}\right)^{2}\,.\label{eq:MSE 2-layer - reduced}
\end{equation}

Note this is only a simplified notation — we do not actually change
the weights of the MNN, so in both equations the activation slopes
remain the same,\emph{ i.e.}, $\boldsymbol{a}_{1}^{\left(n\right)}=\boldsymbol{a}_{1}(\mathbf{W}_{1}\mathbf{x}^{\left(n\right)})\neq\boldsymbol{a}_{1}(\tilde{\mathbf{W}}_{1}\mathbf{x}^{\left(n\right)})$.
If there exists an infinitesimal perturbation which reduces the MSE
in eq. \eqref{eq:MSE 2-layer - reduced}, then there exists a corresponding
infinitesimal perturbation which reduces the MSE in eq. \eqref{eq:MSE 2-layer}.
Therefore, if $\left(\mathbf{W}_{1},\mathbf{W}_{2}\right)$ is a DLM
of the MSE in eq. \eqref{eq:MSE 2-layer}, then $\tilde{\mathbf{W}}_{1}$
must also be a DLM of the MSE in eq. \eqref{eq:MSE 2-layer - reduced}.
Clearly, both DLMs have the same MSE value. Therefore, we will proceed
by assuming that $\tilde{\mathbf{W}}_{1}$ is a DLM of eq. \eqref{eq:MSE 2-layer - reduced},
and any constraint we will derive for the MSE in eq. \eqref{eq:MSE 2-layer - reduced}
will automatically apply to any DLM of the MSE in eq. \eqref{eq:MSE 2-layer}.

If we are at a DLM of eq. \eqref{eq:MSE 2-layer - reduced}, then
its derivative is equal to zero. To calculate this derivative we rely
on two facts. First, we can always switch the order of differentiation
and expectation, since we average over a finite training set. Second,
at any a differentiable point (and in particular, a DLM), the derivative
of $\boldsymbol{a}_{1}$ with respect to the weights is zero. Thus,
we find that, at any DLM, 
\begin{equation}
\nabla_{\tilde{\mathbf{W}}_{1}}\mathrm{MSE}=\E\left[e\boldsymbol{a}_{1}\mathbf{x}^{\top}\right]=0\,.\label{eq:linear regression}
\end{equation}
To reshape this gradient equation to a more convenient form, we denote
Kronecker's product by $\otimes$, and define the ``gradient matrix''
(without the error $e$)
\begin{equation}
\mathbf{G}_{1}\triangleq\mathbf{A}_{1}\circ\mathbf{X}\triangleq\left[\boldsymbol{a}_{1}^{\left(1\right)}\otimes\mathbf{x}^{\left(1\right)},\dots,\boldsymbol{a}_{1}^{\left(N\right)}\otimes\mathbf{x}^{\left(N\right)}\right]\in\mathbb{R}^{d_{0}d_{1}\times N}\,,\label{eq:G}
\end{equation}
where $\circ$ denotes the Khatari-Rao product (\emph{cf. }\cite{Allman2009},
\cite{Bhaskara2013}). Using this notation, and recalling that \textbf{$\mathrm{\mathbf{e}}=\left[e^{\left(1\right)},\dots,e^{\left(N\right)}\right],$}
eq. \eqref{eq:linear regression} becomes 
\begin{equation}
\mathbf{G}_{1}\mathbf{e}=0\,.\label{eq: orthogonality principle}
\end{equation}
Therefore, $\mathbf{e}$ lies in the right nullspace of $\mathbf{G}_{1}$,
which has dimension $N-\mathrm{rank}\left(\mathbf{G}_{1}\right)$.
Specifically, if $\mathrm{rank}\left(\mathbf{G}_{1}\right)=N$, the
only solution is $\mathbf{e}=0$. This immediately implies the following
lemma.
\begin{lem}
\label{lem: full rank (G)}Suppose we are at some DLM of of eq. \eqref{eq:MSE 2-layer - reduced}.
If $\mathrm{rank}\left(\mathbf{G}_{1}\right)=N$, then $\mathrm{MSE}=0$.
\end{lem}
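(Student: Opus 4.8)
The plan is to obtain the conclusion directly from the first-order optimality condition at the DLM, with essentially no extra work beyond what precedes the statement. First I would write the stationarity condition $\nabla_{\tilde{\mathbf{W}}_1}\mathrm{MSE}=0$ and evaluate the gradient of eq. \eqref{eq:MSE 2-layer - reduced}. The one point that deserves care is that the activation slopes $\boldsymbol{a}_1^{(n)}=\boldsymbol{a}_1(\mathbf{W}_1\mathbf{x}^{(n)})$ do depend on the weights; but because they are piecewise constant and we sit at a \emph{differentiable} point, their derivative with respect to the weights is zero there, so they may be treated as fixed. Since differentiation also commutes with the empirical expectation over the finite sample, this yields $\nabla_{\tilde{\mathbf{W}}_1}\mathrm{MSE}=\E[e\,\boldsymbol{a}_1\mathbf{x}^\top]=0$, i.e. eq. \eqref{eq:linear regression}.

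Next I would vectorize this matrix equation: writing $\E[e\,\boldsymbol{a}_1\mathbf{x}^\top]=\tfrac1N\sum_{n=1}^N e^{(n)}\,\boldsymbol{a}_1^{(n)}(\mathbf{x}^{(n)})^\top$ and flattening each rank-one term under the reshaping convention matching eq. \eqref{eq:G}, the condition becomes $\sum_{n=1}^N e^{(n)}(\boldsymbol{a}_1^{(n)}\otimes\mathbf{x}^{(n)})=0$, which is exactly $\mathbf{G}_1\mathbf{e}=0$, i.e. the orthogonality principle eq. \eqref{eq: orthogonality principle}. Then I would invoke the rank hypothesis by a dimension count: $\mathbf{G}_1\in\mathbb{R}^{d_0 d_1\times N}$ with $\mathrm{rank}(\mathbf{G}_1)=N$ has full column rank, so its right nullspace is trivial; hence $\mathbf{G}_1\mathbf{e}=0$ forces $\mathbf{e}=0$, and therefore $\mathrm{MSE}=\tfrac{1}{2N}\|\mathbf{e}\|^2=0$.

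I do not expect a genuine obstacle within this lemma: it is the orthogonality principle plus elementary linear algebra, and the only delicate step is justifying that the weight-derivative of the piecewise-constant slopes vanishes at a differentiable point — which is what makes the locally linear problem eq. \eqref{eq:MSE 2-layer - reduced} behave like ordinary least squares near the DLM. The real difficulty is postponed to the subsequent results, where one must show that $\mathrm{rank}(\mathbf{G}_1)=N$ actually holds for a.e. $(\mathbf{X},\boldsymbol{\mathcal{E}}_1)$ under the mild over-parametrization $N\le d_0 d_1$; that is where a smoothed-analysis argument exploiting the Khatri--Rao structure of $\mathbf{G}_1$ will be required.
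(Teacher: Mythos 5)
Your proposal is correct and follows the paper's own argument essentially verbatim: stationarity at the DLM with the piecewise-constant slopes treated as fixed gives $\E\left[e\,\boldsymbol{a}_{1}\mathbf{x}^{\top}\right]=0$, which vectorizes to $\mathbf{G}_{1}\mathbf{e}=0$, and full column rank then forces $\mathbf{e}=0$, hence $\mathrm{MSE}=0$. You also correctly identify that the real work lies in the subsequent rank guarantee (Lemma \ref{lem:  when G full rank?}), not in this lemma.
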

To show that $\mathbf{G}_{1}$ has, generically, full column rank,
we state the following important result, which a special case of \cite[lemma 13]{Allman2009},
\begin{fact}
\label{fact:Khatri-Rao multiplies rank}For $\mathbf{B}\in\mathbb{R}^{d_{B}\times N}$
and $\mathbf{C}\in\mathbb{R}^{d_{C}\times N}$ with $N\leq d_{B}d_{C}$,
we have, $\left({\bf B},{\bf C}\right)$ almost everywhere,
\begin{equation}
\mathrm{rank}\left(\mathbf{B}\circ\mathbf{C}\right)=N\,.\label{eq: Allman result}
\end{equation}
\end{fact}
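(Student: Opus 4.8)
The plan is to combine the standard ``a nonzero real polynomial vanishes only on a Lebesgue-null set'' principle with an explicit witness matrix. Write $\mathbf{M}\triangleq\mathbf{B}\circ\mathbf{C}\in\mathbb{R}^{d_{B}d_{C}\times N}$, whose $n$-th column is $\boldsymbol{b}^{(n)}\otimes\boldsymbol{c}^{(n)}$; every entry of $\mathbf{M}$ is a bilinear, hence polynomial, function of the $N(d_{B}+d_{C})$ entries of $\mathbf{B}$ and $\mathbf{C}$. Now $\mathrm{rank}(\mathbf{M})<N$ holds precisely when all of its $N\times N$ minors vanish; each such minor, indexed by a choice $I$ of $N$ rows out of $d_{B}d_{C}$, is a polynomial $P_{I}(\mathbf{B},\mathbf{C})$ in those variables. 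A nonzero real polynomial in finitely many variables vanishes only on a set of Lebesgue measure zero (proved by induction on the number of variables via Fubini, using that a nonzero univariate polynomial has finitely many roots). Hence it suffices to exhibit a single pair $(\mathbf{B}^{\star},\mathbf{C}^{\star})$ at which some $P_{I}$ is nonzero: then $\{\mathrm{rank}(\mathbf{M})<N\}\subseteq\{P_{I}=0\}$ is Lebesgue-null, which is exactly the claim.

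For the witness, identify $\mathbb{R}^{d_{B}d_{C}}$ with $\mathbb{R}^{d_{B}}\otimes\mathbb{R}^{d_{C}}$, so that $\mathbf{e}_{p}\otimes\mathbf{e}_{q}$ (with $\mathbf{e}_{k}$ the $k$-th standard basis vector of the relevant space), for $1\le p\le d_{B}$ and $1\le q\le d_{C}$, runs over the $d_{B}d_{C}$ standard coordinate vectors. Because $N\le d_{B}d_{C}$ we may choose $N$ distinct index pairs $(p_{1},q_{1}),\dots,(p_{N},q_{N})$ and set $\boldsymbol{b}^{(n)}\triangleq\mathbf{e}_{p_{n}}$, $\boldsymbol{c}^{(n)}\triangleq\mathbf{e}_{q_{n}}$ (there is no requirement that the $\boldsymbol{b}^{(n)}$, or the $\boldsymbol{c}^{(n)}$, be distinct among themselves). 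Then the columns of $\mathbf{M}^{\star}=\mathbf{B}^{\star}\circ\mathbf{C}^{\star}$ are the $N$ distinct coordinate vectors $\mathbf{e}_{p_{n}}\otimes\mathbf{e}_{q_{n}}$, so the $N\times N$ submatrix of $\mathbf{M}^{\star}$ on the rows indexed by $I=\{(p_{n},q_{n})\}_{n=1}^{N}$ is the identity, giving $P_{I}(\mathbf{B}^{\star},\mathbf{C}^{\star})=\det\mathbf{I}_{N}=1\neq0$. Thus $P_{I}\not\equiv0$, and the first paragraph finishes the proof. (Combined with Lemma \ref{lem: full rank (G)} applied to $\mathbf{B}=\mathbf{A}_{1}$, $\mathbf{C}=\mathbf{X}$, this is what will yield the zero-training-error conclusion.)

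The argument is short, so there is no real ``hard part''; the two points that need care are (i) the bookkeeping that connects the abstract ``almost everywhere'' statement to the mere existence of one explicit witness — this is exactly what the polynomial measure-zero lemma buys us, and one should state that lemma cleanly — and (ii) fixing the vectorization convention so that $\mathbf{e}_{p}\otimes\mathbf{e}_{q}$ is genuinely a coordinate vector of $\mathbb{R}^{d_{B}d_{C}}$ (its single nonzero entry sits in position $(p-1)d_{C}+q$), so that distinct pairs really do give distinct, hence linearly independent, columns. No deeper structure — in particular none of the full Kruskal-rank machinery of \cite{Allman2009} — is needed for this special case.
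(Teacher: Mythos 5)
Your proof is correct, and it is essentially the paper's own argument: the appendix proves this fact (as the $L=2$ specialization of Theorem \ref{thm: MNN instability}, via Lemma \ref{lem: polynomial proof}) by exactly the same route — the entries of $\mathbf{B}\circ\mathbf{C}$ are polynomial in $(\mathbf{B},\mathbf{C})$, a nonzero polynomial vanishes only on a Lebesgue-null set, and an explicit witness built from standard basis vectors makes the relevant $N\times N$ submatrix the identity. The only cosmetic difference is that the paper packages the rank-deficiency condition into a single sum-of-squares-of-minors polynomial while you use one well-chosen minor, which changes nothing of substance.
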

However, since $\mathbf{A}_{1}$ depends on $\mathbf{X}$, we cannot
apply eq. \eqref{eq: Allman result} directly to $\mathbf{G}_{1}=\mathbf{A}_{1}\circ\mathbf{X}$.
Instead, we apply eq. \eqref{eq: Allman result} for all (finitely
many) possible values of $\mathrm{sign}\left(\mathbf{W}_{1}\mathbf{X}\right)$
(appendix \ref{subsec:Single-hidden-layer proof}), and obtain
\begin{lem}
\label{lem:  when G full rank?}For $L=2$, if $N\leq d_{1}d_{0}$,
then simultaneously for every $\mathbf{w}$, $\mathrm{rank}\left(\mathbf{G}_{1}\right)=\mathrm{rank}\left(\mathbf{A}_{1}\circ\mathbf{X}\right)=N$,
$(\mathbf{X},\boldsymbol{\mathcal{E}}_{1})$ almost everywhere.
\end{lem}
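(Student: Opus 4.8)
The plan is to reduce the claim to Fact~\ref{fact:Khatri-Rao multiplies rank} by conditioning on the finitely many sign patterns that $\mathbf{W}_1\mathbf{X}$ can produce. The key observation is that, for \emph{any} weights, the activation-slope matrix factors as $\mathbf{A}_1=\boldsymbol{\mathcal{E}}_1\odot\mathbf{S}$, where $\odot$ denotes the entrywise (Hadamard) product and $\mathbf{S}=\mathbf{S}(\mathbf{W}_1,\mathbf{X})\in\{s,1\}^{d_1\times N}$ is the matrix whose $(i,n)$ entry equals $1$ when $(\mathbf{W}_1\mathbf{x}^{(n)})_i\geq0$ and $s$ otherwise. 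There are exactly $2^{d_1N}$ matrices in $\{s,1\}^{d_1\times N}$, and each $\mathbf{S}(\mathbf{W}_1,\mathbf{X})$ is one of them, no matter what $\mathbf{w}$ is. Moreover, since $s\neq0$ every entry of $\mathbf{S}$ is nonzero, so for a \emph{fixed} $\mathbf{S}$ the map $\mathbf{B}\mapsto\mathbf{B}\odot\mathbf{S}$ on $\mathbb{R}^{d_1\times N}$ is an invertible linear change of variables; hence it carries Lebesgue-null sets to Lebesgue-null sets, and whenever a property holds for a.e.\ $\mathbf{B}$ it also holds for a.e.\ $\mathbf{B}$ after replacing $\mathbf{B}$ by $\mathbf{B}\odot\mathbf{S}$.

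With these observations, I would argue as follows. Fix one matrix $\mathbf{S}\in\{s,1\}^{d_1\times N}$. Applying Fact~\ref{fact:Khatri-Rao multiplies rank} with $\mathbf{B}=\boldsymbol{\mathcal{E}}_1\odot\mathbf{S}\in\mathbb{R}^{d_1\times N}$ and $\mathbf{C}=\mathbf{X}\in\mathbb{R}^{d_0\times N}$ — legitimate since $N\leq d_1d_0$ — gives $\mathrm{rank}\bigl((\boldsymbol{\mathcal{E}}_1\odot\mathbf{S})\circ\mathbf{X}\bigr)=N$ for a.e.\ $(\boldsymbol{\mathcal{E}}_1\odot\mathbf{S},\mathbf{X})$, hence, by the change-of-variables remark applied to the $\boldsymbol{\mathcal{E}}_1$ block, for a.e.\ $(\mathbf{X},\boldsymbol{\mathcal{E}}_1)$. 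Let $\mathcal{N}_{\mathbf{S}}$ be the corresponding null set, and put $\mathcal{N}=\bigcup_{\mathbf{S}\in\{s,1\}^{d_1\times N}}\mathcal{N}_{\mathbf{S}}$; this is a finite union of null sets, hence null. By construction, for every $(\mathbf{X},\boldsymbol{\mathcal{E}}_1)\notin\mathcal{N}$ we have $\mathrm{rank}\bigl((\boldsymbol{\mathcal{E}}_1\odot\mathbf{S})\circ\mathbf{X}\bigr)=N$ \emph{simultaneously} for every $\mathbf{S}\in\{s,1\}^{d_1\times N}$.

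It then remains to pass back to arbitrary $\mathbf{w}$. Fix any $(\mathbf{X},\boldsymbol{\mathcal{E}}_1)\notin\mathcal{N}$ and any weight vector $\mathbf{w}$; its first-layer matrix $\mathbf{W}_1$ induces a sign pattern $\mathbf{S}(\mathbf{W}_1,\mathbf{X})\in\{s,1\}^{d_1\times N}$, so that $\mathbf{G}_1=\mathbf{A}_1\circ\mathbf{X}=\bigl(\boldsymbol{\mathcal{E}}_1\odot\mathbf{S}(\mathbf{W}_1,\mathbf{X})\bigr)\circ\mathbf{X}$ has rank $N$. Since $\mathbf{w}$ was arbitrary, this proves the lemma.

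I expect the only genuine difficulty to be precisely the one flagged before the statement: since $\mathbf{A}_1$ is a function of $\mathbf{X}$, Fact~\ref{fact:Khatri-Rao multiplies rank} cannot be applied directly to $\mathbf{G}_1=\mathbf{A}_1\circ\mathbf{X}$. The way around it is that this dependence runs only through a \emph{finite-valued} quantity — the sign pattern — so one applies the Fact in each of the $2^{d_1N}$ cases and takes a union bound; the supporting facts (a finite union of null sets is null; Hadamard multiplication by a fixed matrix with nonzero entries is an invertible linear map, hence preserves null sets) are elementary. One small bookkeeping point: at a differentiable point of the loss one has $u^{(n)}_{i,1}\neq0$, so the sign pattern is unambiguous; and in any event, assigning slope $1$ on the boundary keeps the list of patterns finite, so the argument is unaffected.
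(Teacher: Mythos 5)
Your proposal is correct and follows essentially the same route as the paper's proof: enumerate the finitely many sign/activation patterns, apply Fact~\ref{fact:Khatri-Rao multiplies rank} for each fixed pattern, take a finite union (equivalently, intersection of a.e.\ properties as in Fact~\ref{fact:intersect-a.e.}), and note that every $\mathbf{w}$ realizes one of these patterns. Your only addition is to spell out the step the paper leaves implicit — that ``$(\mathbf{X},\mathbf{A}_{1}^{p})$-a.e.\ implies $(\mathbf{X},\boldsymbol{\mathcal{E}}_{1})$-a.e.'' via the invertible entrywise rescaling $\boldsymbol{\mathcal{E}}_{1}\mapsto\boldsymbol{\mathcal{E}}_{1}\odot\mathbf{S}$ — which is a welcome but minor elaboration, not a different argument.
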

Combining Lemma \ref{lem: full rank (G)} with Lemma \ref{lem:  when G full rank?},
we immediately have 
\begin{thm}
\label{thm: committee machine}If $N\leq d_{1}d_{0}$,\textbf{ }then
all differentiable local minima of eq. \eqref{eq:MSE 2-layer} are
global minima with $\mathrm{MSE}=0$, $(\mathbf{X},\boldsymbol{\mathcal{E}}_{1})$
almost everywhere.
\end{thm}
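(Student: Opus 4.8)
The plan is to derive Theorem~\ref{thm: committee machine} directly from Lemma~\ref{lem: full rank (G)} and Lemma~\ref{lem:  when G full rank?}; the only genuinely new content to supply is the proof of Lemma~\ref{lem:  when G full rank?}, so I will sketch that as well. Granting the two lemmas, the theorem follows by a short chain. Fix the full-measure set of $(\mathbf{X},\boldsymbol{\mathcal{E}}_1)$ on which Lemma~\ref{lem:  when G full rank?} holds, that is, on which $\mathrm{rank}(\mathbf{G}_1)=N$ \emph{for every} weight vector $\mathbf{w}$. Let $(\mathbf{W}_1,\mathbf{W}_2)$ be any DLM of eq.~\eqref{eq:MSE 2-layer}. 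By the reduction preceding eq.~\eqref{eq:MSE 2-layer - reduced}, $\tilde{\mathbf{W}}_1=\mathrm{diag}(\mathbf{w}_2)\mathbf{W}_1$ is a DLM of the reduced loss with the same MSE value, so eq.~\eqref{eq: orthogonality principle} gives $\mathbf{G}_1\mathbf{e}=0$; since $\mathrm{rank}(\mathbf{G}_1)=N$ at this point, $\mathbf{e}=0$ and $\mathrm{MSE}=0$ by Lemma~\ref{lem: full rank (G)}. As $\mathrm{MSE}\ge 0$ always, a point with zero MSE is a global minimum, so on this full-measure set every DLM of eq.~\eqref{eq:MSE 2-layer} is a global minimum with $\mathrm{MSE}=0$.

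It remains to prove Lemma~\ref{lem:  when G full rank?}, and this is the step I expect to be the main obstacle. The difficulty is that $\mathbf{A}_1$ is a function of $\mathbf{X}$ (via $\mathrm{sign}(\mathbf{W}_1\mathbf{X})$), so Fact~\ref{fact:Khatri-Rao multiplies rank} cannot be applied to $\mathbf{G}_1=\mathbf{A}_1\circ\mathbf{X}$ as though $\mathbf{A}_1$ and $\mathbf{X}$ were two independent generic matrices. The device that decouples them is that, for fixed $\mathbf{X}$, the map $\mathbf{W}_1\mapsto\mathrm{sign}(\mathbf{W}_1\mathbf{X})\in\{+,-\}^{d_1\times N}$ has finite image; consequently, as $\mathbf{w}$ ranges over $\mathbb{R}^{\omega}$, the matrix $\mathbf{A}_1$ takes only finitely many values, each of the form $\mathbf{A}_1=\boldsymbol{\mathcal{E}}_1\odot\mathbf{M}$, where $\mathbf{M}$ is a fixed matrix with entries in $\{1,s\}$ coming from one sign pattern and $\odot$ denotes the entrywise product.

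Now fix one such pattern matrix $\mathbf{M}$. Since the leaky slope satisfies $s\neq 0$, every entry of $\mathbf{M}$ is nonzero, so $\boldsymbol{\mathcal{E}}_1\mapsto\boldsymbol{\mathcal{E}}_1\odot\mathbf{M}$ is a linear isomorphism of $\mathbb{R}^{d_1\times N}$; hence $(\mathbf{X},\boldsymbol{\mathcal{E}}_1)\mapsto(\mathbf{X},\boldsymbol{\mathcal{E}}_1\odot\mathbf{M})$ is a linear isomorphism of $\mathbb{R}^{d_0\times N}\times\mathbb{R}^{d_1\times N}$ and therefore pulls back Lebesgue-null sets to Lebesgue-null sets. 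Apply Fact~\ref{fact:Khatri-Rao multiplies rank} with $\mathbf{B}=\boldsymbol{\mathcal{E}}_1\odot\mathbf{M}$ ($d_B=d_1$) and $\mathbf{C}=\mathbf{X}$ ($d_C=d_0$): this is legitimate exactly because $N\le d_1 d_0$, and it shows that the set of $(\mathbf{X},\boldsymbol{\mathcal{E}}_1)$ with $\mathrm{rank}((\boldsymbol{\mathcal{E}}_1\odot\mathbf{M})\circ\mathbf{X})<N$ is null. Taking the union of these finitely many null sets over all pattern matrices $\mathbf{M}$ (there are at most $2^{d_1 N}$) produces one null set outside of which $\mathrm{rank}(\mathbf{A}_1\circ\mathbf{X})=N$ for every $\mathbf{w}$, which is Lemma~\ref{lem:  when G full rank?}. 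This also clarifies why the dropout term $\boldsymbol{\mathcal{E}}_1$ is needed: without it, $\mathbf{A}_1$ would be a deterministic function of $\mathbf{X}$ and could be too degenerate to combine with $\mathbf{X}$ through Fact~\ref{fact:Khatri-Rao multiplies rank}; multiplying by $\boldsymbol{\mathcal{E}}_1$ re-randomizes $\mathbf{A}_1$ independently of $\mathbf{X}$.
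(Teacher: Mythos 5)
Your proposal is correct and follows essentially the same route as the paper: the theorem is obtained by combining Lemma~\ref{lem: full rank (G)} with Lemma~\ref{lem:  when G full rank?} (via the reduction to eq.~\eqref{eq:MSE 2-layer - reduced} and the gradient condition $\mathbf{G}_{1}\mathbf{e}=0$), and your proof of Lemma~\ref{lem:  when G full rank?} — fix one of the finitely many sign/activation patterns, apply Fact~\ref{fact:Khatri-Rao multiplies rank}, transfer the almost-everywhere statement from $\mathbf{A}_{1}$ to $\boldsymbol{\mathcal{E}}_{1}$ via the entrywise multiplication by the nonzero pattern matrix, take the finite union of null sets, and note that $\mathbf{G}_{1}$ always equals $\mathbf{G}_{1}^{p}$ for some $p$ — is exactly the paper's appendix argument, with the measure-preservation step spelled out slightly more explicitly.
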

Note that this result is tight, in the sense that the minimal hidden
layer width $d_{1}=\left\lceil N/d_{0}\right\rceil $, is exactly
the same minimal width which ensures a MNN can implement any dichotomy
\cite{Baum1988} for inputs in general position.

\section{Multiple Hidden Layers\label{sec:General-case}}

We examine the implications of our approach for MNNs with more than
one hidden layer. To find the DLMs of a general MNN, we again need
to differentiate the MSE and equate it to zero. As in section \ref{sec: Single hidden layer},
we exchange the order of expectation and differentiation, and use
the fact that $\boldsymbol{a}_{1},...,\boldsymbol{a}_{L-1}$ are piecewise
constant. Differentiating near a DLM with respect to $\mathbf{w}_{l}$,
the vectorized version of \textbf{$\mathbf{W}_{l}$}, we obtain
\begin{equation}
\frac{1}{2}\nabla_{\mathbf{w}_{l}}\E e^{2}=\E\left[e\nabla_{\mathbf{w}_{l}}e\right]=0\label{eq: Grad MSE}
\end{equation}
To calculate $\nabla_{\mathbf{w}_{l}}e$ for to the $l$-th weight
layer, we write\footnote{For matrix products we use the convention $\prod_{k=1}^{K}{\bf M}_{k}={\bf M}_{K}{\bf M}_{K-1}\cdots{\bf M}_{2}{\bf M}_{1}$.}
its input $\mathbf{v}_{l}$ and its back-propagated ``delta'' signal
(without the error $e$) 
\begin{equation}
\mathbf{v}_{l}\triangleq\left(\prod_{m=1}^{l}\mathrm{diag}\left(\boldsymbol{a}_{m}\right)\mathbf{W}_{m}\right)\mathbf{x}\,;\,\boldsymbol{\delta}_{l}\triangleq\mathrm{diag}\left(\boldsymbol{a}_{l}\right)\prod_{m=L}^{l+1}\mathbf{W}_{m}^{\top}\mathrm{diag}\left(\boldsymbol{a}_{m}\right)\,,\label{eq: v,delta}
\end{equation}
where we keep in mind that $\boldsymbol{a}_{l}$ are generally functions
of the inputs and the weights. Using this notation we find 
\begin{equation}
\nabla_{\mathbf{w}_{l}}e=\nabla_{\mathbf{w}_{l}}\left(\prod_{m=1}^{L}\mathrm{diag}\left(\boldsymbol{a}_{m}\right)\mathbf{W}_{m}\right)\mathbf{x}=\mathbf{\boldsymbol{\delta}}_{l}^{\top}\otimes\mathbf{v}_{l-1}^{\top}\,.\label{eq:grad e}
\end{equation}
Thus, defining 
\[
\boldsymbol{\Delta}_{l}=\left[\mathbf{\boldsymbol{\delta}}_{l}^{\left(1\right)},\dots,\mathbf{\boldsymbol{\delta}}_{l}^{\left(N\right)}\right];\,\mathbf{V}_{l}=\left[\mathbf{v}_{l}^{\left(1\right)},\dots,\mathbf{v}_{l}^{\left(N\right)}\right]
\]
 we can re-formulate eq. \eqref{eq: Grad MSE} as
\begin{equation}
\mathbf{G}_{l}\mathbf{e}=0\,,\mathrm{with}\,\mathbf{G}_{l}\triangleq\boldsymbol{\Delta}_{l}\circ\mathbf{V}_{l-1}=\left[\mathbf{\boldsymbol{\delta}}_{l}^{\left(1\right)}\otimes\mathbf{v}_{l-1}^{\left(1\right)},\dots,\mathbf{\boldsymbol{\delta}}_{l}^{\left(N\right)}\otimes\mathbf{v}_{l-1}^{\left(N\right)}\right]\in\mathbb{R}^{d_{l-1}d_{l}\times N}\label{eq: Gme=00003D0}
\end{equation}
similarly to eq. \eqref{eq: orthogonality principle} the previous
section. Therefore, each weight layer provides as many linear constraints
(rows) as the number of its parameters. We can also combine all the
constraints and get 
\begin{equation}
\mathbf{G}\mathbf{e}=0\,,\,\mathrm{with}\,\mathbf{G}\triangleq\left[\mathbf{G}_{1}^{\top},\dots,\mathbf{G}_{L}^{\top}\right]^{\top}\in\mathbb{R}^{\omega\times N},\label{eq: Ge=00003D0}
\end{equation}
In which we have $\omega$ constraints (rows) corresponding to all
the parameters in the MNN. As in the previous section, if $\omega\geq N$
and $\mathrm{rank}\left(\mathbf{G}\right)=N$ we must have $\mathbf{e}=0$.
However, it is generally difficult to find the rank of $\mathbf{G}$,
since we need to find whether different $\mathbf{G}_{l}$ have linearly
dependent rows. Therefore, we will focus on the last hidden layer
and on the condition $\mathrm{rank}\left(\mathbf{G}_{L-1}\right)=N$,
which ensures $\mathbf{e}=0$, from eq. \eqref{eq: Gme=00003D0}.
However, since $\mathbf{v}_{L-2}$ depends on the weights, we cannot
use our results from the previous section, and it is possible that
$\mathrm{rank}\left(\mathbf{G}_{L-1}\right)<N$. For example, when
$\mathbf{w}=0$ and $L\geq3$, we get $\mathbf{G}=0$ so we are at
a differentiable critical point (note it $\mathbf{G}$ is well defined,
even though $\forall l,n:\,\mathbf{u}_{l}^{\left(n\right)}=0$), which
is generally not a global minimum. Intuitively, such cases seem fragile,
since if we give $\mathbf{w}$ any random perturbation, one would
expect that ``typically'' we would have $\mathrm{rank}\left(\mathbf{G}_{L-1}\right)=N$.
We establish this idea by first proving the following stronger result
(appendix \ref{subsec:General-case- proof}),
\begin{thm}
\label{thm: MNN instability}For $N\leq d_{L-2}d_{L-1}$ and fixed
values of ${\bf W}_{1},...,{\bf W}_{L-2}$, any differentiable local
minimum of the MSE (eq. \ref{eq: MSE}) as a function of $\mathbf{W}_{L-1}$
and $\mathbf{W}_{L}$, is also a global minimum, with $\mathrm{MSE}=0$,
$\left(\mathbf{X},\boldsymbol{\mathcal{E}}_{1},\dots,\boldsymbol{\mathcal{E}}_{L-1},{\bf W}_{1},...,{\bf W}_{L-2}\right)$
almost everywhere.
\end{thm}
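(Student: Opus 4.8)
The plan is to reduce Theorem~\ref{thm: MNN instability} to the one-hidden-layer analysis of Section~\ref{sec: Single hidden layer}, viewing the output $\mathbf{V}_{L-2}$ of the frozen first $L-2$ layers as the ``input'' of a two-layer sub-network with weights $\mathbf{W}_{L-1},\mathbf{W}_{L}$. With $\mathbf{W}_{1},\dots,\mathbf{W}_{L-2}$ fixed the network output is $v_{L}=\mathbf{w}_{L}^{\top}\mathrm{diag}(\boldsymbol{a}_{L-1})\mathbf{W}_{L-1}\mathbf{v}_{L-2}=\boldsymbol{a}_{L-1}^{\top}\tilde{\mathbf{W}}_{L-1}\mathbf{v}_{L-2}$ with $\tilde{\mathbf{W}}_{L-1}=\mathrm{diag}(\mathbf{w}_{L})\mathbf{W}_{L-1}$, which is exactly eq.~\eqref{eq:MSE 2-layer - reduced} with $\mathbf{x}$ replaced by $\mathbf{v}_{L-2}$. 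Absorbing $\mathbf{w}_{L}$ as in Section~\ref{sec: Single hidden layer} and repeating the computation that gave eq.~\eqref{eq: Gme=00003D0}, every differentiable local minimum of the MSE in $(\mathbf{W}_{L-1},\mathbf{W}_{L})$ satisfies $\mathbf{G}_{L-1}\mathbf{e}=0$ with $\mathbf{G}_{L-1}=\mathbf{A}_{L-1}\circ\mathbf{V}_{L-2}$; so it suffices to prove $\mathrm{rank}(\mathbf{A}_{L-1}\circ\mathbf{V}_{L-2})=N$ simultaneously for every $\mathbf{W}_{L-1}$, a.e. This is the analogue of Lemma~\ref{lem:  when G full rank?} with $\mathbf{V}_{L-2}$ in place of $\mathbf{X}$, the difference being that $\mathbf{V}_{L-2}$ is a deterministic, piecewise-linear function of the data rather than a generic matrix.

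Next I would remove the dependence of $\mathbf{A}_{L-1}$ on $\mathbf{W}_{L-1}$ exactly as in the proof of Lemma~\ref{lem:  when G full rank?}, by a union bound over the finitely many sign patterns $\mathrm{sign}(\tilde{\mathbf{W}}_{L-1}\mathbf{V}_{L-2})$ (whose number is bounded in terms of $N,d_{L-2},d_{L-1}$ only, for any $\mathbf{V}_{L-2}$); for a fixed pattern, $\mathbf{A}_{L-1}=\boldsymbol{\mathcal{E}}_{L-1}\odot\mathbf{S}$ with $\mathbf{S}$ fixed and entries in $\{1,s\}$. Since $s\neq0$, $\boldsymbol{\mathcal{E}}_{L-1}\mapsto\boldsymbol{\mathcal{E}}_{L-1}\odot\mathbf{S}$ is a linear bijection, and $\boldsymbol{\mathcal{E}}_{L-1}$ is independent of the data determining $\mathbf{V}_{L-2}$; because $\{\mathrm{rank}<N\}$ is the vanishing of a polynomial in the entries of $\boldsymbol{\mathcal{E}}_{L-1}$, a Fubini argument then shows it is enough to prove: for a.e.\ $(\mathbf{X},\boldsymbol{\mathcal{E}}_{1},\dots,\boldsymbol{\mathcal{E}}_{L-2},\mathbf{W}_{1},\dots,\mathbf{W}_{L-2})$ there exists a single matrix $\mathbf{B}_{0}$ with $\mathrm{rank}(\mathbf{B}_{0}\circ\mathbf{V}_{L-2})=N$. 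By a Rado/transversal (matroid-union) argument such a $\mathbf{B}_{0}$ exists whenever the columns of $\mathbf{V}_{L-2}$ are spread out enough: no more than $k\,d_{L-1}$ of them lie in any $k$-dimensional subspace of $\mathbb{R}^{d_{L-2}}$, for each $k$ (for $k=d_{L-2}$ this is automatic from $N\le d_{L-2}d_{L-1}$, so only small $k$ --- e.g.\ no zero column, at most $d_{L-1}$ mutually parallel columns --- impose real constraints).

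It thus remains to show that for a.e.\ data $\mathbf{V}_{L-2}$ has this spreading property, which I would prove by induction on $l=0,\dots,L-2$: $\mathbf{V}_{0}=\mathbf{X}$ is generic, hence fine; for the inductive step, on each of the finitely many linear regions the layer map is $\mathbf{v}_{l}=\mathrm{diag}(\boldsymbol{a}_{l})\mathbf{W}_{l}\mathbf{v}_{l-1}$ with $\mathrm{diag}(\boldsymbol{a}_{l})$ invertible a.e.\ (entries of $\boldsymbol{\mathcal{E}}_{l}$ nonzero, $s\neq0$), so genericity of $\mathbf{W}_{l}$, $\boldsymbol{\mathcal{E}}_{l}$ and of the incoming columns is used to propagate both the subspace-spreading bound and the entrywise non-vanishing of $\mathbf{U}_{l},\mathbf{V}_{l}$. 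Combining everything: for a.e.\ data there is a $\mathbf{B}_{0}$ as above, hence for a.e.\ $\boldsymbol{\mathcal{E}}_{L-1}$ the rank is $N$ for each sign pattern, hence $\mathrm{rank}(\mathbf{G}_{L-1})=N$ for all $\mathbf{W}_{L-1}$, hence $\mathbf{e}=0$.

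I expect the main obstacle to be exactly this inductive genericity claim for $\mathbf{V}_{L-2}$: it is a deterministic image of the data possibly supported on a lower-dimensional set (so ``generic $\mathbf{V}_{L-2}$'' is unavailable and Fact~\ref{fact:Khatri-Rao multiplies rank} cannot be invoked directly), the relevant spreading bound is weaker than full Kruskal rank and need not survive width-shrinking layers under a naive induction, and one must keep the data-dependent sign patterns and the weights disentangled throughout. Everything else --- the absorption of $\mathbf{w}_{L}$, the finite union over sign patterns, and the Fubini/zero-polynomial bookkeeping --- is routine given the $L=2$ machinery and Fact~\ref{fact:Khatri-Rao multiplies rank}.
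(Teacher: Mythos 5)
Your reduction is fine up to a point: absorbing $\mathbf{w}_{L}$, deriving $\mathbf{G}_{L-1}\mathbf{e}=0$ at any DLM in $(\mathbf{W}_{L-1},\mathbf{W}_{L})$ via eq.~\eqref{eq: Gme=00003D0}, taking a finite union over the sign patterns of $\tilde{\mathbf{W}}_{L-1}\mathbf{V}_{L-2}$, and using the polynomial/measure-zero argument in $\boldsymbol{\mathcal{E}}_{L-1}$ are all sound and parallel the paper. But the argument then funnels everything into the claim that, for almost every $(\mathbf{X},\boldsymbol{\mathcal{E}}_{1},\dots,\boldsymbol{\mathcal{E}}_{L-2},\mathbf{W}_{1},\dots,\mathbf{W}_{L-2})$, the matrix $\mathbf{V}_{L-2}$ satisfies the matroid-union spreading condition (no more than $k\,d_{L-1}$ columns in any $k$-dimensional subspace), to be proved by induction through the layers — and this is exactly the step you leave unproven and flag as the main obstacle. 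That is a genuine gap, not bookkeeping: $\mathbf{V}_{L-2}$ is a piecewise-polynomial image of the data whose columns pass through data-dependent sign patterns and possibly narrow intermediate layers, and the spreading property survives a width bottleneck only because of the per-sample noise $\boldsymbol{\mathcal{E}}_{L-2}$ (consistent with the paper's remark that fixing $\boldsymbol{\mathcal{E}}_{L-2}$ forces the weaker condition $N\leq d_{L-1}\min_{l\leq L-2}d_{l}$); your inductive sketch does not show how the noise is used layer by layer, so as written the proof is incomplete.

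The missing idea — and the paper's route — is that you never need any genericity statement about $\mathbf{V}_{L-2}$ itself. Since the theorem's almost-everywhere clause is over the \emph{joint} vector $(\mathbf{X},\boldsymbol{\mathcal{E}}_{1},\dots,\boldsymbol{\mathcal{E}}_{L-1},\mathbf{W}_{1},\dots,\mathbf{W}_{L-2})$, you may observe that, for each fixed activation pattern $p$, every entry of $\mathbf{G}_{L-1}^{p}=\mathbf{A}_{L-1}^{p}\circ\mathbf{V}_{L-2}^{p}$ is a polynomial in \emph{all} of these variables simultaneously. Then rank deficiency is the zero set of a single polynomial (Lemma~\ref{lem: polynomial proof}), so it suffices to exhibit one explicit witness configuration where $\mathrm{rank}(\mathbf{G}_{L-1}^{p})=N$ — the paper takes all-ones inputs, rank-one weight matrices, and activation values arranged so that $\mathbf{G}_{L-1}^{\prime p}=\left[\mathbf{I}_{d_{L-2}d_{L-1}}\right]_{1,\dots,N}$ — and conclude rank $N$ a.e.\ for that pattern, then intersect over the finitely many patterns via Fact~\ref{fact:intersect-a.e.}. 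Your Fubini step conditions on the data and frozen weights and thereby forfeits the freedom to choose them in the witness, which is what forces you into the hard (and unfinished) layerwise induction; if you instead apply the one-witness polynomial argument jointly in all smoothed variables, the entire Rado/spreading machinery becomes unnecessary.
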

Theorem \ref{thm: MNN instability} means that for any (Lebesgue measurable)
random set of weights of the first $L-2$ layers, every DLM with respect
to the weights of the last two layers is also a global minimum with
loss 0. Note that the condition $N\leq d_{L-2}d_{L-1}$ implies that
$\mathbf{W}_{L-1}$ has more weights then $N$ (a plausible scenario,
\emph{e.g.}, \cite{Krizhevsky2012}). In contrast, if, instead we
were only allowed to adjust the last layer of a random MNN, then
low training error can only be ensured with extremely wide layers
($d_{L-1}\geq N$, as discussed in section \ref{sec:Related-work}),
which require much more parameters ($d_{L-2}N$). 

Theorem \ref{thm: MNN instability} can be easily extended to other
types of neural networks, beyond of the basic formalism introduced
in section \ref{sec:Preliminaries}. For example, we can replace the
layers below $L-2$ with convolutional layers, or other types of architectures.
Additionally, the proof of Theorem \ref{thm: MNN instability} holds
(with a trivial adjustment) when $\boldsymbol{\mathcal{E}}_{1},...,\boldsymbol{\mathcal{E}}_{L-3}$
are fixed to have identical nonzero entries — that is, with dropout
turned off except in the last two hidden layers. The result continues
to hold even when $\boldsymbol{\mathcal{E}}_{L-2}$ is fixed as well,
but then the condition $N\leq d_{L-2}d_{L-1}$ has to be weakened
to $N\leq d_{L-1}\min_{l\leq L-2}d_{l}$.

Next, we formalize our intuition above that DLMs of deep MNNs must
have zero loss or be fragile, in the sense of the following immediate
corollary of Theorem \ref{thm: MNN instability},
\begin{cor}
\label{cor: MNN instability}For $N\leq d_{L-2}d_{L-1}$, let $\mathbf{w}$
be a differentiable local minimum of the MSE (eq. \ref{eq: MSE}).
Consider a new weight vector $\tilde{\mathbf{w}}=\mathbf{w}+\delta\text{\textbf{w}}$,
where $\delta\text{\textbf{w}}$ has i.i.d. Gaussian (or uniform)
entries with arbitrarily small variance. Then, $\left(\mathbf{X},\boldsymbol{\mathcal{E}}_{1},\dots,\boldsymbol{\mathcal{E}}_{L-1}\right)$
almost everywhere and with probability 1 w.r.t. $\delta{\bf w}$,
if $\tilde{{\bf W}}_{1},...,\tilde{{\bf W}}_{L-2}$ are held fixed,
all differentiable local minima of the MSE as a function of $\mathbf{W}_{L-1}$
and $\mathbf{W}_{L}$ are also global minima, with $\mathrm{MSE}=0$.
\end{cor}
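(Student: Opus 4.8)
The plan is to obtain Corollary~\ref{cor: MNN instability} from Theorem~\ref{thm: MNN instability} by a short measure-theoretic argument: the random perturbation $\delta\mathbf{w}$ moves the first $L-2$ weight layers off of the Lebesgue-null exceptional set on which Theorem~\ref{thm: MNN instability} is allowed to fail. Write $\mathbf{w}_{1:L-2}$, $\delta\mathbf{w}_{1:L-2}$, $\tilde{\mathbf{w}}_{1:L-2}$ for the blocks of $\mathbf{w}$, $\delta\mathbf{w}$, $\tilde{\mathbf{w}}$ corresponding to layers $1,\dots,L-2$, and let $D=\sum_{l=1}^{L-2}d_{l-1}d_{l}$ be their dimension. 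Since $N\leq d_{L-2}d_{L-1}$, Theorem~\ref{thm: MNN instability} supplies a set $\mathcal{N}$, of Lebesgue measure zero in the joint space of $\left(\mathbf{X},\boldsymbol{\mathcal{E}}_{1},\dots,\boldsymbol{\mathcal{E}}_{L-1},\mathbf{w}_{1:L-2}\right)$, such that whenever $\left(\mathbf{X},\boldsymbol{\mathcal{E}}_{1},\dots,\boldsymbol{\mathcal{E}}_{L-1},\mathbf{w}_{1:L-2}\right)\notin\mathcal{N}$, every DLM of the MSE as a function of $\left(\mathbf{W}_{L-1},\mathbf{W}_{L}\right)$, with the first $L-2$ layers frozen at $\mathbf{w}_{1:L-2}$, is global with $\mathrm{MSE}=0$.

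First I would apply Fubini's theorem to $\mathbf{1}_{\mathcal{N}}$: as $\mathcal{N}$ is null, there is a null set $\mathcal{M}$ in the space of $\left(\mathbf{X},\boldsymbol{\mathcal{E}}_{1},\dots,\boldsymbol{\mathcal{E}}_{L-1}\right)$ off of which the slice $\mathcal{N}_{\mathbf{X},\boldsymbol{\mathcal{E}}}=\{\mathbf{z}\in\mathbb{R}^{D}:\left(\mathbf{X},\boldsymbol{\mathcal{E}}_{1},\dots,\boldsymbol{\mathcal{E}}_{L-1},\mathbf{z}\right)\in\mathcal{N}\}$ has $D$-dimensional Lebesgue measure zero; this $\mathcal{M}$ supplies the ``$\left(\mathbf{X},\boldsymbol{\mathcal{E}}_{1},\dots,\boldsymbol{\mathcal{E}}_{L-1}\right)$ almost everywhere'' of the statement. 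Fix any $\left(\mathbf{X},\boldsymbol{\mathcal{E}}_{1},\dots,\boldsymbol{\mathcal{E}}_{L-1}\right)\notin\mathcal{M}$ and let $\mathbf{w}$ be the given DLM of the full MSE; although $\mathbf{w}$ may depend on this data, it is now a fixed vector, so $\mathbf{w}_{1:L-2}$ is a fixed point of $\mathbb{R}^{D}$. Since $\delta\mathbf{w}_{1:L-2}$ is i.i.d.\ Gaussian (or uniform), it has a density on $\mathbb{R}^{D}$, and hence so does $\tilde{\mathbf{w}}_{1:L-2}=\mathbf{w}_{1:L-2}+\delta\mathbf{w}_{1:L-2}$, namely the translate of that density by the fixed vector $\mathbf{w}_{1:L-2}$. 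Therefore
\[
\mathbb{P}\!\left(\tilde{\mathbf{w}}_{1:L-2}\in\mathcal{N}_{\mathbf{X},\boldsymbol{\mathcal{E}}}\right)=\int_{\mathcal{N}_{\mathbf{X},\boldsymbol{\mathcal{E}}}}p_{\tilde{\mathbf{w}}_{1:L-2}}(\mathbf{z})\,d\mathbf{z}=0 ,
\]
because $\mathcal{N}_{\mathbf{X},\boldsymbol{\mathcal{E}}}$ is Lebesgue-null. Thus, with probability $1$ over $\delta\mathbf{w}$, we have $\left(\mathbf{X},\boldsymbol{\mathcal{E}}_{1},\dots,\boldsymbol{\mathcal{E}}_{L-1},\tilde{\mathbf{w}}_{1:L-2}\right)\notin\mathcal{N}$, and applying Theorem~\ref{thm: MNN instability} with $\mathbf{W}_{1},\dots,\mathbf{W}_{L-2}$ set to the blocks of $\tilde{\mathbf{w}}_{1:L-2}$ gives exactly the asserted conclusion; the perturbation of the last two layers plays no role, since Theorem~\ref{thm: MNN instability} already ranges over all $\left(\mathbf{W}_{L-1},\mathbf{W}_{L}\right)$.

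I expect no genuinely hard step here; it is pure bookkeeping (Fubini plus translation-invariance of null sets / absolute continuity of the perturbation law). The one point requiring care is the order of quantifiers: the center $\mathbf{w}_{1:L-2}$ of the perturbation can itself be a measurable function of $\left(\mathbf{X},\boldsymbol{\mathcal{E}}_{1},\dots,\boldsymbol{\mathcal{E}}_{L-1}\right)$, so one must first condition on the data — using that the exceptional set of Theorem~\ref{thm: MNN instability} is \emph{jointly} null in all its arguments, hence slice-null a.e.\ by Fubini — and only then invoke the independence and absolute continuity of $\delta\mathbf{w}$. Note also that the hypothesis that $\mathbf{w}$ is a DLM is not actually used in the argument; it only serves to make the interpretive point that a potentially bad DLM of the full loss is ``fragile'', i.e.\ escapable by an arbitrarily small perturbation followed by re-optimizing the last two weight layers.
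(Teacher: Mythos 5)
Your proposal is correct and follows exactly the route the paper intends: the paper states Corollary~\ref{cor: MNN instability} as an ``immediate'' consequence of Theorem~\ref{thm: MNN instability}, and your Fubini-plus-absolute-continuity bookkeeping (slice the joint null set, condition on $\left(\mathbf{X},\boldsymbol{\mathcal{E}}_{1},\dots,\boldsymbol{\mathcal{E}}_{L-1}\right)$, then use that the translated density of $\delta\mathbf{w}_{1:L-2}$ assigns zero mass to the null slice) is precisely the argument being left implicit, with the quantifier order handled correctly. Your observations that the DLM hypothesis on $\mathbf{w}$ is not actually needed and that perturbing the last two layers is immaterial are also consistent with the paper's reading of the result as a fragility statement.
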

Note that this result is different from the classical notion of linear
stability at differentiable critical points, which is based on the
analysis of the eigenvalues of the Hessian $\mathbf{H}$ of the MSE.
The Hessian can be written as a symmetric block matrix, where each
of its blocks $\mathbf{H}_{ml}\in\mathbb{R}^{d_{m-1}d_{m}\times d_{l-1}d_{l}}$
corresponds to layers $m$ and $l$. Specifically, using eq. \eqref{eq:grad e},
each block can be written as a sum of two components
\begin{equation}
\mathbf{H}_{ml}\triangleq\frac{1}{2}\nabla_{\mathbf{w}_{l}}\nabla_{\mathbf{w}_{m}^{\top}}\E e^{2}=\E\left[e\nabla_{\mathbf{w}_{l}}\nabla_{\mathbf{w}_{m}^{\top}}e\right]+\E\left[\nabla_{\mathbf{w}_{l}}e\nabla_{\mathbf{w}_{m}^{\top}}e\right]\triangleq\E\left[e\boldsymbol{\Lambda}_{ml}\right]+\frac{1}{N}\mathbf{G}_{m}\mathbf{G}_{l}^{\top}\,,\label{eq: Hessian-1}
\end{equation}
where, for $l<m$
\begin{equation}
\!\!\boldsymbol{\Lambda}_{ml}\!\triangleq\!\nabla_{\mathbf{w}_{l}}\nabla_{\mathbf{w}_{m}^{\top}}e=\!\nabla_{\mathbf{w}_{l}}\left(\mathbf{\boldsymbol{\delta}}_{m}\otimes\mathbf{v}_{m-1}\right)\!=\!\mathbf{\boldsymbol{\delta}}_{m}\otimes\left(\prod_{l^{\prime}=l+1}^{m-1}\mathrm{diag}\left(\boldsymbol{a}_{l^{\prime}}\right)\mathbf{W}_{l^{\prime}}\!\right)\mathrm{diag}\left(\boldsymbol{a}_{l}\right)\otimes\mathbf{v}_{l-1}^{\top}\label{eq: Lambda-1}
\end{equation}
while $\boldsymbol{\Lambda}_{ll}\!=0$, and $\boldsymbol{\Lambda}_{ml}=\boldsymbol{\Lambda}_{lm}^{\top}$
for $m<l$. Combining all the blocks, we get 
\[
\mathbf{H}=\E\left[e\boldsymbol{\Lambda}\right]+\frac{1}{N}\mathbf{G}\mathbf{G}^{\top}\in\mathbb{R}^{\omega\times\omega}\,.
\]
If we are at a DLM, then $\mathbf{H}$ is positive semi-definite.
If we examine again the differentiable critical point $\mathbf{w}=0$
and $L\geq3$, we see that $\mathbf{H}=0$, so it is not a strict
saddle. However, this point is fragile in the sense of Corollary \ref{cor: MNN instability}.

Interestingly, the positive semi-definite nature of the Hessian at
DLMs imposes additional constraints on the error. Note that the matrix
$\mathbf{G}\mathbf{G}^{\top}$ is symmetric positive semi-definite
of relatively small rank $\left(\leq N\right)$. However, $\E\left[e\boldsymbol{\Lambda}\right]$
can potentially be of high rank, and thus may have many negative eigenvalues
(the trace of $\E\left[e\boldsymbol{\Lambda}\right]$ is zero, so
the sum of all its eigenvalues is also zero). Therefore, intuitively,
we expect that for $\mathbf{H}$ to be positive semi-definite, $\mathbf{e}$
has to become small, generically (\emph{i.e.}, except at some pathological
points such as \textbf{$\mathbf{w}=0$}). This is indeed observed
empirically \cite[Fig 1]{Dauphin2014}. 

\section{Numerical Experiments\label{sec:Numerical-Experiments}}

\begin{figure*}[t]
\begin{centering}
\includegraphics[width=0.9\textwidth]{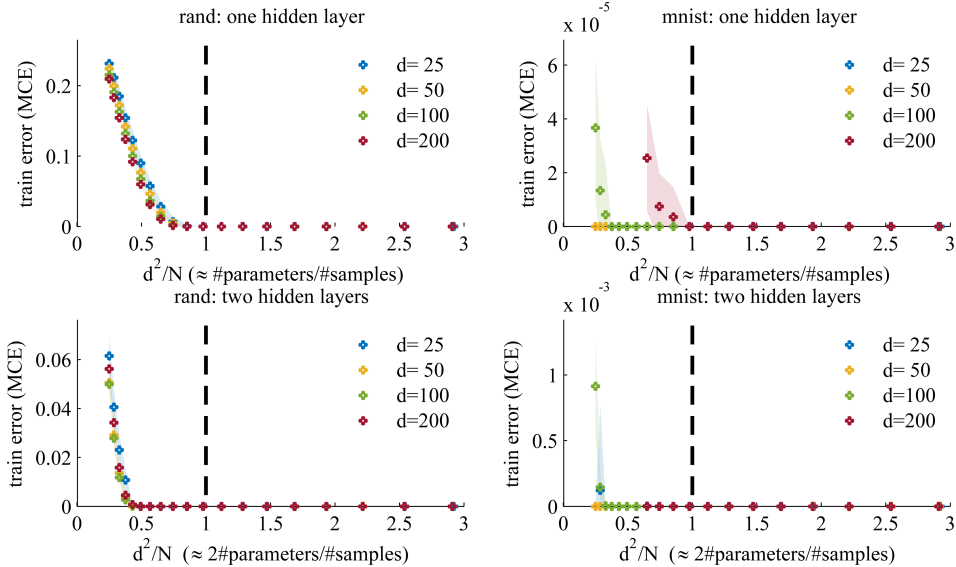}
\par\end{centering}
\caption{\textbf{Final training error (mean$\pm$std) in the over-parametrized
regime is low, as predicted by our results (right of the dashed black
line). }We trained standard MNNs with one or two hidden layers (with
widths equal to $d=d_{0}$), a single output, (non-leaky) ReLU activations,
MSE loss, and no dropout, on two datasets: (1) a synthetic random
dataset in which $\forall n=1,\dots,N$, \textbf{$\mathbf{x}^{\left(n\right)}$
}was drawn from a normal distribution $\mathcal{N}\left(0,1\right)$,\textbf{
}and $y^{\left(n\right)}=\pm1$ with probability $0.5$ (2) binary
classification (between digits $0-4$ and $5-9$) on $N$ sized subsets
of the MNIST dataset \cite{Lecun1998}. The value at a data point
is an average of the mean classification error (MCE) over 30 repetitions.
In this figure, when the mean MCE reached zero, it was zero for all
30 repetitions. \label{fig:training error}}
\end{figure*}

\begin{figure*}[t]
\begin{centering}
\includegraphics[width=0.9\textwidth]{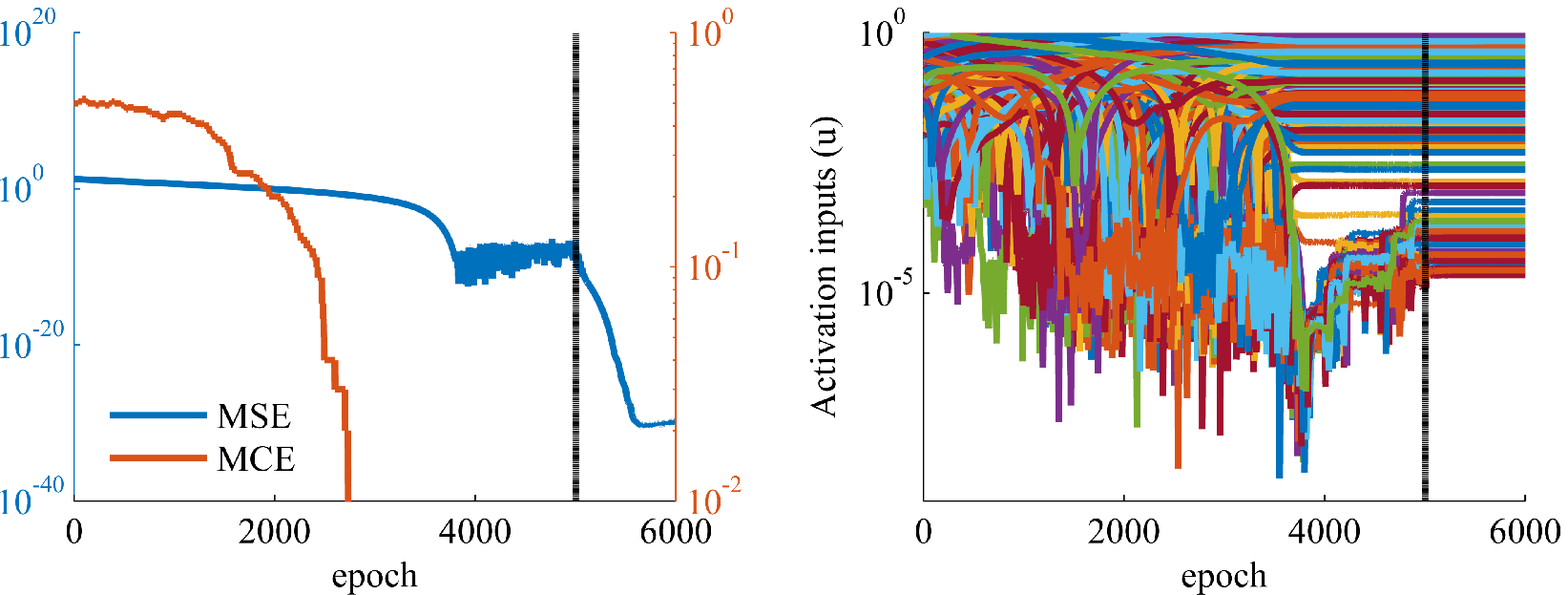}
\par\end{centering}
\caption{\textbf{The existence of differentiable local minima. }In this representative
figure, we trained a MNN with a single hidden layer, as in Fig. \ref{fig:training error},
with $d=25$, on the synthetic random data ($N=100$) until convergence
with gradient descent (so each epoch is a gradient step). Then, starting
from epoch 5000 (dashed line), we gradually decreased the learning
rate (multiplying it by $0.999$ each epoch) until it was about $10^{-9}$.
We see that the activation inputs converged to values above $10^{-5}$,
while the final MSE was about $10^{-31}$. The magnitudes of these
numbers, and the fact that all the neuronal inputs do not keep decreasing
with the learning rate, indicate that we converged to a differentiable
local minimum, with MSE equal to 0, as predicted.\label{fig:The-existence-of-DLMs}}
\end{figure*}

In this section we examine numerically our main results in this paper,
Theorems \ref{thm: committee machine} and \ref{thm: MNN instability},
which hold almost everywhere with respect to the Lebesgue measure
over the data and dropout realization. However, without dropout, this
analysis is not guaranteed to hold. For example, our results do not
hold in MNNs where all the weights are negative, so $\mathbf{A}_{L-1}$
has constant entries and therefore $\mathbf{G}_{L-1}$ cannot have
full rank.

Nonetheless, if the activations are sufficiently ``variable'' (formally,
$\mathbf{G}_{L-1}$ has full rank), then we expect our results to
hold even without dropout noise and with the leaky ReLU's replaced
with basic ReLU's ($s=0$). We tested this numerically and present
the result in Figure \ref{fig:training error}. We performed a binary
classification task on a synthetic random dataset and subsets of the
MNIST dataset, and show the mean classification error (MCE, which
is the fraction of samples incorrectly classified), commonly used
at these tasks. Note that the MNIST dataset, which contains some redundant
information between training samples, is much easier (a lower error)
than the completely random synthetic data. Thus the performance on
the random data is more representative of the ``typical worst case'',\emph{
}(\emph{i.e.}, hard yet non-pathological input), which our smoothed
analysis approach is aimed to uncover.

For one hidden layer, the error goes to zero when the number of non-redundant
parameters is greater than the number of samples ($d^{2}/N\geq1$),
as predicted by Theorem \ref{thm: committee machine}. Theorem \ref{thm: MNN instability}
predicts a similar behavior when $d^{2}/N\geq1$ for a MNN with two
hidden layers (note we trained all the layers of the MNN). This prediction
also seems to hold, but less tightly. This is reasonable, as our analysis
in section \ref{sec:General-case} suggests that typically the error
would be zero if the total number of parameters is larger the number
of training samples ($d^{2}/N\geq0.5$), though this was not proven.
We note that in all the repetitions in Figure \ref{fig:training error},
for $d^{2}\geq N$, the matrix $\mathbf{G}_{L-1}$ always had full
rank. However, for smaller MNNs than shown in Figure \ref{fig:training error}
(about $d\leq20$), sometimes $\mathbf{G}_{L-1}$ did not have full
rank. 

Recall that Theorems \ref{thm: committee machine} and \ref{thm: MNN instability}
both give guarantees only on the training error at a DLM. However,
for finite $N$, since the loss is non-differentiable at some points,
it is not clear that such DLMs actually exist, or that we can converge
to them. To check if this is indeed the case, we performed the following
experiment. We trained the MNN for many epochs, using batch gradient
steps. Then, we started to gradually decrease the learning rate. If
the we are at DLM, then all the activation inputs $u_{i,l}^{(n)}$
should converge to a distinctly non-zero value, as demonstrated in
Figure \ref{fig:The-existence-of-DLMs}. In this figure, we tested
a small MNN on synthetic data, and all the neural inputs seem to remain
constant on a non-zero value, while the MSE keeps decreasing. This
was the typical case in our experiments. However, in some instances,
we would see some $u_{i,l}^{(n)}$ converge to a very low value ($10^{-16}$).
This may indicate that convergence to non-differentiable points is
possible as well.

\paragraph{Implementation details}

Weights were initialized to be uniform with mean zero and variance
$2/d$, as suggested in \cite{He2015a}. In each epoch we randomly
permuted the dataset and used the Adam \cite{Kingma2015} optimization
method (a variant of SGD) with $\beta_{1}=0.9,\beta_{2}=0.99,\eps=10^{-8}$.
In Figure \ref{fig:training error} the training was done for no more
than $4000$ epochs  (we stopped if $\text{MCE}=0$ was reached).
Different learning rates and mini-batch sizes were selected for each
dataset and architecture. 

\section{Discussion\label{sec:Discussion}}

In this work we provided training error guarantees for mildly over-parameterized
MNNs at all differentiable local minima (DLM). For a single hidden
layer (section \ref{sec: Single hidden layer}), the proof is surprisingly
simple. We show that the MSE near each DLM is locally similar to that
of linear regression (\emph{i.e.}, a single linear neuron). This allows
us to prove (Theorem \ref{thm: committee machine}) that, almost everywhere,
if the number of non-redundant parameters $\left(d_{0}d_{1}\right)$
is larger then the number of samples $N$, then all DLMs are a global
minima with $\mathrm{MSE}=0$, as in linear regression. With more
then one hidden layers, Theorem \ref{thm: MNN instability} states
that if $N\leq d_{L-2}d_{L-1}$ (\emph{i.e.}, so $\mathbf{W}_{L-1}$
has more weights than $N$) then we can always perturb and fix some
weights in the MNN so that all the DLMs would again be global minima
with $\mathrm{MSE}=0$. 

Note that in a realistic setting, zero training error should not necessarily
be the intended objective of training, since it may encourage overfitting.
Our main goal here was to show that that essentially all DLMs provide
good training error (which is not trivial in a non-convex model).
However, one can decrease the size of the model or artificially increase
the number of samples (\emph{e.g.}, using data augmentation, or re-sampling
the dropout noise) to be in a mildly under-parameterized regime, and
have relatively small error, as seen in Figure \ref{fig:training error}.
For example, in AlexNet \cite{Krizhevsky2012} $\mathbf{W}_{L-1}$
has $4096^{2}\approx17\cdot10^{6}$ weights, which is larger than
$N=1.2\cdot10^{6}$, as required by Theorem \ref{thm: MNN instability}.
However, without data augmentation or dropout, Alexnet did exhibit
severe overfitting. 

Our analysis is non-asymptotic, relying on the fact that, near differentiable
points, MNNs with piecewise linear activation functions can be differentiated
similarly to linear MNNs \cite{Saxe2013}. We use a smoothed analysis
approach, in which we examine the error of the MNN under slight random
perturbations of worst-case input and dropout. Our experiments (Figure
\ref{fig:training error}) suggest that our results describe the typical
performance of MNNs, even without dropout. Note we do not claim that
dropout has any merit in reducing the training loss in real datasets
— as used in practice, dropout typically trades off the training performance
in favor of improved generalization. Thus, the role of dropout in
our results is purely theoretical. In particular, dropout ensures
that the gradient matrix $\mathbf{G}_{L-1}$ (eq. \eqref{eq: Gme=00003D0})
has full column rank. It would be an interesting direction for future
work to find other sufficient conditions for $\mathbf{G}_{L-1}$ to
have full column rank.

Many other directions remain for future work. For example, we believe
it should be possible to extend this work to multi-output MNNs and/or
other convex loss functions besides the quadratic loss. Our results
might also be extended for stable non-differentiable critical points
(which may exist, see section \ref{sec:Numerical-Experiments}) using
the necessary condition that the sub-gradient set contains zero in
any critical point \cite{Rockafellarf1979}. Another important direction
is improving the results of Theorem \ref{thm: MNN instability}, so
it would make efficient use of the all the parameters of the MNNs,
and not just the last two weight layers. Such results might be used
as a guideline for architecture design, when training error is a major
bottleneck \cite{He2015}. Last, but not least, in this work we focused
on the empirical risk (training error) at DLMs. Such guarantees might
be combined with generalization guarantees (\emph{e.g.}, \cite{Hardt2015a}),
to obtain novel excess risk bounds that go beyond uniform convergence
analysis.

\subsubsection*{Acknowledgments}

The authors are grateful to O. Barak, D. Carmon, Y. Han., Y. Harel,
R. Meir, E. Meirom, L. Paninski, R. Rubin, M. Stern, U. Sümbül and
A. Wolf for helpful discussions. The research was partially supported
by the Gruss Lipper Charitable Foundation, and by the Intelligence
Advanced Research Projects Activity (IARPA) via Department of Interior/
Interior Business Center (DoI/IBC) contract number D16PC00003. The
U.S. Government is authorized to reproduce and distribute reprints
for Governmental purposes notwithstanding any copyright annotation
thereon. Disclaimer: The views and conclusions contained herein are
those of the authors and should not be interpreted as necessarily
representing the official policies or endorsements, either expressed
or implied, of IARPA, DoI/IBC, or the U.S. Government.

\appendix
\newpage{}

\part*{Appendix}

In this appendix we give the proofs for our main results in the paper.
But first, we define some additional notation. Recall that for every
layer $l$, data instance $n$ and index $i$, the activation slope
$a_{i,l}^{\left(n\right)}$ takes one of two values: $\epsilon_{i,l}^{\left(n\right)}$
or $s\epsilon_{i,l}^{\left(n\right)}$(with $s\neq0$). Hence, for
a single realization of $\boldsymbol{\mathcal{E}}{}_{l}=\left[\boldsymbol{\epsilon}_{l}^{\left(1\right)},\dots,\mathbf{\boldsymbol{\epsilon}}_{l}^{\left(N\right)}\right]$,
the matrix $\mathbf{A}_{l}=\left[\boldsymbol{a}_{l}^{\left(1\right)},...,\boldsymbol{a}_{l}^{\left(N\right)}\right]\in\mathbb{R}^{d_{l}\times N}$
can have up to $2^{Nd_{l}}$ distinct values, and the tuple $({\bf A}_{1},...,{\bf A}_{L-1})$
can have at most $P=2^{N\sum_{l=1}^{L-1}d_{l}}$ distinct values.
We will find it useful to enumerate these possibilities by an index
$p\in\left\{ 1,...,P\right\} $ which will be called the activation
pattern. We will similarly denote $\mathbf{A}_{l}^{p}$ to be the
value of $\mathbf{A}_{l}$ , given under activation pattern $p$.
Lastly, we will make use of the following fact:
\begin{fact}
\label{fact:intersect-a.e.}If properties $P_{1},P_{2},...,P_{m}$
hold almost everywhere, then $\cap_{i=1}^{m}P_{i}$ also holds almost
everywhere.
\end{fact}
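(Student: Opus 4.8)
The plan is to pass to complements and reduce the whole statement to the elementary fact that a finite union of null sets is null. Recall from the definition given above that ``$P_i$ holds almost everywhere'' means that the set of objects at which $P_i$ fails has measure zero, with respect to the fixed Lebesgue measure $\mu$ under consideration. The entire argument is therefore just a rephrasing of finite subadditivity of $\mu$.

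First I would, for each $i\in\{1,\dots,m\}$, let $F_i$ denote the set of objects at which property $P_i$ fails to hold; by hypothesis $\mu(F_i)=0$. Next I observe that the conjunction $\cap_{i=1}^{m}P_i$ holds at a given object precisely when every one of $P_1,\dots,P_m$ holds there, equivalently when that object lies outside each $F_i$. Consequently the set of objects at which $\cap_{i=1}^{m}P_i$ fails is exactly the union $F=\cup_{i=1}^{m}F_i$.

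Finally I would invoke finite subadditivity of $\mu$ (an immediate consequence of countable additivity together with nonnegativity), which gives $\mu(F)=\mu\!\left(\cup_{i=1}^{m}F_i\right)\le\sum_{i=1}^{m}\mu(F_i)=0$. Since $\mu$ is nonnegative, this forces $\mu(F)=0$, i.e. $\cap_{i=1}^{m}P_i$ holds almost everywhere, as claimed. There is essentially no obstacle to surmount here: the conclusion is a direct restatement of subadditivity applied to finitely many null sets. The only point meriting a remark is the measurability of the $F_i$, but this is already built into the phrase ``holds almost everywhere'' as defined above, which presupposes that each exceptional set has a well-defined measure equal to zero; and since $m$ is finite, countable additivity is not even strictly needed — finite additivity suffices.
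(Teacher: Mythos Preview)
Your argument is correct: taking complements and applying finite subadditivity of the measure to conclude that a finite union of null sets is null is exactly the standard proof. The paper itself states this as a Fact without proof, so there is nothing further to compare against.
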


\section{Single hidden layer — proof of Lemma \ref{lem:  when G full rank?}
\label{subsec:Single-hidden-layer proof}}

We prove the following Lemma \ref{lem:  when G full rank?}, using
the previous notation and results from section \eqref{sec: Single hidden layer}.
\begin{lem*}
For $L=2$, if $N\leq d_{1}d_{0}$, then simultaneously for every
$\mathbf{w}$, $\mathrm{rank}\left(\mathbf{G}_{1}\right)=\mathrm{rank}\left(\mathbf{A}_{1}\circ\mathbf{X}\right)=N$,
$(\mathbf{X},\boldsymbol{\mathcal{E}}_{1})$ almost everywhere.
\end{lem*}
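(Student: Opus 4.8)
The plan is to reduce the statement to Fact~\ref{fact:Khatri-Rao multiplies rank} by conditioning on the sign pattern of $\mathbf{W}_1\mathbf{X}$. The sole obstruction to applying Fact~\ref{fact:Khatri-Rao multiplies rank} directly to $\mathbf{G}_1=\mathbf{A}_1\circ\mathbf{X}$ is that $\mathbf{A}_1$ is itself a function of $\mathbf{X}$ --- through $\boldsymbol{a}_1^{(n)}=\boldsymbol{a}_1(\mathbf{W}_1\mathbf{x}^{(n)})$ --- so Fact~\ref{fact:Khatri-Rao multiplies rank}, which requires the two Khatri--Rao factors to vary freely, does not apply verbatim. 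Freezing that dependence removes the obstruction.

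First I would observe that, by the definition of the activation slopes, for \emph{any} weight vector $\mathbf{w}$ every entry of $\mathbf{A}_1$ equals $\epsilon_{i,1}^{(n)}$ or $s\,\epsilon_{i,1}^{(n)}$, that is, $\mathbf{A}_1=\boldsymbol{\Sigma}\odot\boldsymbol{\mathcal{E}}_1$, where $\odot$ denotes the entrywise (Hadamard) product and $\boldsymbol{\Sigma}\in\{1,s\}^{d_1\times N}$ is the matrix of $1$'s and $s$'s selected by $\mathrm{sign}(\mathbf{W}_1\mathbf{X})$. As $\mathbf{w}$ varies, $\mathbf{A}_1$ is determined by $\mathbf{w}$ only through this $\boldsymbol{\Sigma}$, which takes at most $P=2^{d_1 N}$ distinct values $\boldsymbol{\Sigma}^{(1)},\dots,\boldsymbol{\Sigma}^{(P)}$ (all sign matrices; it does not matter whether each is actually realizable by some $\mathbf{W}_1$). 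Hence it suffices to prove that, $(\mathbf{X},\boldsymbol{\mathcal{E}}_1)$ almost everywhere, $\mathrm{rank}\bigl((\boldsymbol{\Sigma}^{(p)}\odot\boldsymbol{\mathcal{E}}_1)\circ\mathbf{X}\bigr)=N$ holds simultaneously for all $p\in\{1,\dots,P\}$.

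Next, fix one pattern $\boldsymbol{\Sigma}:=\boldsymbol{\Sigma}^{(p)}$. Because $s\neq 0$, the map $\phi_{\boldsymbol{\Sigma}}\colon(\boldsymbol{\mathcal{E}}_1,\mathbf{X})\mapsto(\boldsymbol{\Sigma}\odot\boldsymbol{\mathcal{E}}_1,\mathbf{X})$ is a linear isomorphism of $\mathbb{R}^{d_1\times N}\times\mathbb{R}^{d_0\times N}$, so $\phi_{\boldsymbol{\Sigma}}$ and $\phi_{\boldsymbol{\Sigma}}^{-1}$ map Lebesgue-null sets to Lebesgue-null sets. Applying Fact~\ref{fact:Khatri-Rao multiplies rank} with $\mathbf{B}$ ranging over $\mathbb{R}^{d_1\times N}$, $\mathbf{C}=\mathbf{X}\in\mathbb{R}^{d_0\times N}$ and $N\le d_1 d_0$, the set $\{(\mathbf{B},\mathbf{X}):\mathrm{rank}(\mathbf{B}\circ\mathbf{X})<N\}$ is null; its preimage under $\phi_{\boldsymbol{\Sigma}}$, namely $\{(\boldsymbol{\mathcal{E}}_1,\mathbf{X}):\mathrm{rank}((\boldsymbol{\Sigma}\odot\boldsymbol{\mathcal{E}}_1)\circ\mathbf{X})<N\}$, is therefore null as well. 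So for each fixed $p$ the rank-$N$ property holds $(\mathbf{X},\boldsymbol{\mathcal{E}}_1)$ almost everywhere.

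Finally, since $P$ is finite, Fact~\ref{fact:intersect-a.e.} gives that the rank-$N$ property holds simultaneously over all $p$, $(\mathbf{X},\boldsymbol{\mathcal{E}}_1)$ almost everywhere. On that full-measure set, for every $\mathbf{w}$ we have $\mathbf{G}_1=\mathbf{A}_1\circ\mathbf{X}=(\boldsymbol{\Sigma}^{(p)}\odot\boldsymbol{\mathcal{E}}_1)\circ\mathbf{X}$ for the index $p$ matching $\mathrm{sign}(\mathbf{W}_1\mathbf{X})$, whence $\mathrm{rank}(\mathbf{G}_1)=N$, proving the lemma. The main --- essentially the only --- difficulty is the circular dependence of $\mathbf{A}_1$ on $\mathbf{X}$; it is resolved by taking a finite union over sign patterns and using that rescaling $\boldsymbol{\mathcal{E}}_1$ entrywise by the nonzero constants $1,s$ is a null-set-preserving bijection. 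The rest is bookkeeping.
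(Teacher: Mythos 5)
Your proof is correct and follows essentially the same route as the paper: fix one of the finitely many activation (sign) patterns, apply Fact~\ref{fact:Khatri-Rao multiplies rank} to conclude full column rank almost everywhere for that pattern, intersect over all patterns via Fact~\ref{fact:intersect-a.e.}, and note that any $\mathbf{w}$ realizes one of these patterns. The only difference is that you spell out, via the entrywise-rescaling isomorphism $\phi_{\boldsymbol{\Sigma}}$ (valid since $s\neq 0$), the null-set transfer from $(\mathbf{X},\mathbf{A}_1^p)$ to $(\mathbf{X},\boldsymbol{\mathcal{E}}_1)$ that the paper asserts with the words ``and hence also $(\mathbf{X},\boldsymbol{\mathcal{E}}_{1})$-a.e.''
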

\begin{proof}
We fix an activation pattern $p$ and set $\mathbf{G}_{1}^{p}=\mathbf{A}_{1}^{p}\circ\mathbf{X}$.
We apply eq. \eqref{eq: Allman result}  to conclude that $\mathrm{rank}\left(\mathbf{G}_{1}^{p}\right)=\mathrm{rank}\left(\mathbf{A}_{1}^{p}\circ\mathbf{X}\right)=N$,
$(\mathbf{X},\mathbf{A}_{1}^{p})$-a.e. and hence also $(\mathbf{X},\boldsymbol{\mathcal{E}}_{1})$-a.e..
We repeat the argument for all $2^{Nd_{1}}$ values of $p$, and use
fact \ref{fact:intersect-a.e.}. We conclude that $\mathrm{rank}\left(\mathbf{G}_{1}^{p}\right)=N$
for all $p$ simultaneously, $(\mathbf{X},\boldsymbol{\mathcal{E}}_{1})$-a.e..
Since for every set of weights we have $\mathbf{G}=\mathbf{\mathbf{G}}^{p}$
for some $p$, we have $\mathrm{rank}\left(\mathbf{G}_{1}\right)=N$,
$(\mathbf{X},\boldsymbol{\mathcal{E}}_{1})$-a.e.
\end{proof}

\section{Multiple Hidden Layers — proof of theorem \ref{thm: MNN instability}\label{subsec:General-case- proof}}

First we prove the following helpful Lemma, using a technique similar
to that of \cite{Allman2009}. 
\begin{lem}
Let $\mathbf{M}\left(\boldsymbol{\theta}\right)\in\mathbb{R}^{a\times b}$
be a matrix with $a\geq b$, with entries that are all polynomial
functions of some vector $\boldsymbol{\theta}$. Also, we assume that
for some value $\boldsymbol{\theta}_{0}$, we have $\mathrm{rank}\left(\mathbf{\mathbf{M}}\left(\boldsymbol{\theta}_{0}\right)\right)=b$.
Then, for almost every $\boldsymbol{\theta}$, we have $\mathrm{rank}\left(\mathbf{\mathbf{M}}\left(\boldsymbol{\theta}\right)\right)=b$.
\label{lem: polynomial proof}
\end{lem}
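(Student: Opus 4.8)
The plan is to reduce the full-rank condition to the non-vanishing of a single polynomial and then invoke the standard fact that the zero set of a nonzero polynomial has Lebesgue measure zero. First I would recall the linear-algebra characterization: since $a\geq b$, the matrix $\mathbf{M}(\boldsymbol{\theta})$ has rank $b$ if and only if at least one of its $\binom{a}{b}$ maximal ($b\times b$) minors is nonzero. By the Leibniz expansion of the determinant, each such minor is a polynomial in the entries of $\mathbf{M}$; since those entries are by hypothesis polynomials in $\boldsymbol{\theta}$, each maximal minor is itself a polynomial function $q_j(\boldsymbol{\theta})$, $j=1,\dots,\binom{a}{b}$.

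Next I would use the assumption at $\boldsymbol{\theta}_0$: because $\mathrm{rank}(\mathbf{M}(\boldsymbol{\theta}_0))=b$, some maximal minor is nonzero there, i.e. there is an index $j^\star$ with $q_{j^\star}(\boldsymbol{\theta}_0)\neq 0$. In particular $q_{j^\star}$ is not the identically-zero polynomial. I would then invoke (or, if the paper prefers, prove in one line by induction on the number of variables via Fubini — one variable being the fact that a nonzero univariate polynomial has finitely many roots) the elementary statement that the vanishing locus $\{\boldsymbol{\theta}: q_{j^\star}(\boldsymbol{\theta})=0\}$ of a nonzero polynomial has Lebesgue measure zero. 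Hence $q_{j^\star}(\boldsymbol{\theta})\neq 0$ for almost every $\boldsymbol{\theta}$, and on that full-measure set the matrix $\mathbf{M}(\boldsymbol{\theta})$ has a nonzero $b\times b$ minor, so $\mathrm{rank}(\mathbf{M}(\boldsymbol{\theta}))\geq b$; combined with $a\geq b$ this forces $\mathrm{rank}(\mathbf{M}(\boldsymbol{\theta}))=b$, as claimed.

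I do not anticipate a genuine obstacle: the only non-bookkeeping ingredient is the measure-zero property of the zero set of a nonzero polynomial, which is a standard fact about Lebesgue measure on $\mathbb{R}^{\dim\boldsymbol{\theta}}$, and everything else is the determinant/minor characterization of rank. In the writeup I would simply state that fact explicitly (or cite it) and present the argument as the three short steps above; the reason this lemma is worth isolating is that, together with Fact \ref{fact:Khatri-Rao multiplies rank} applied at a convenient base point $\boldsymbol{\theta}_0$, it will let us transfer full-rank statements about Khatri-Rao products to matrices like $\mathbf{G}_{L-1}$ whose factors depend polynomially on the (now variable) weights $\mathbf{W}_1,\dots,\mathbf{W}_{L-2}$ and on $\mathbf{X}$ and $\boldsymbol{\mathcal{E}}$.
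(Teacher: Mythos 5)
Your proposal is correct and follows essentially the same route as the paper: reduce full column rank to the non-vanishing of polynomial(s) in $\boldsymbol{\theta}$ built from the $b\times b$ minors, use $\boldsymbol{\theta}_0$ to see the relevant polynomial is not identically zero, and conclude via the measure-zero property of a nonzero polynomial's zero set. The only (cosmetic) difference is that the paper aggregates all maximal minors into a single polynomial by summing their squares, whereas you track one minor that is nonzero at $\boldsymbol{\theta}_0$; both arguments are equivalent.
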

\begin{proof}
There exists a polynomial mapping $g:\mathbb{\,R}^{a\times b}\to\mathbb{R}$
such that $\mathbf{M}\left(\boldsymbol{\theta}\right)$ does not have
full column rank if and only if $g\left(\mathbf{M}\left(\boldsymbol{\theta}\right)\right)=0$.
Since $b\leq a$ we can construct $g$ explicitly as the sum of the
squares of the determinants of all possible different subsets of $b$
rows from $\mathbf{M}\left(\boldsymbol{\theta}\right)$. Since $g\left(\mathbf{M}\left(\boldsymbol{\theta}_{0}\right)\right)\neq0$,
we find that $g\left(\mathbf{M}\left(\boldsymbol{\theta}\right)\right)$
is not identically equal to zero. Therefore, the zeros of such a (``proper'')
polynomial, in which $g\left(\mathbf{M}\left(\boldsymbol{\theta}\right)\right)=0$,
are a set of measure zero. 
\end{proof}
Next we prove Theorem \ref{thm: MNN instability}, using the previous
notation and the results from section \eqref{sec:General-case}:
\begin{thm*}
For $N\leq d_{L-2}d_{L-1}$ and fixed values of ${\bf W}_{1},...,{\bf W}_{L-2}$,
any differentiable local minimum of the MSE (eq. \ref{eq: MSE}) as
a function of $\mathbf{W}_{L-1}$ and $\mathbf{W}_{L}$, is also a
global minimum, with $\mathrm{MSE}=0$, $(\mathbf{X},\boldsymbol{\mathcal{E}}_{1},\dots,\boldsymbol{\mathcal{E}}_{L-1},{\bf W}_{1},...,{\bf W}_{L-2})$
almost everywhere.
\end{thm*}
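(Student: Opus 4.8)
\emph{Plan and setup.} The strategy mirrors the single–hidden–layer argument: near a differentiable local minimum (DLM) the slopes $\boldsymbol{a}_{1},\dots,\boldsymbol{a}_{L-1}$ are locally constant, fixing an activation pattern $p$, and then $\mathbf{V}_{L-2}=\mathbf{V}_{L-2}^{p}$ depends only on the \emph{fixed} weights $\mathbf{W}_{1},\dots,\mathbf{W}_{L-2}$ and on $\mathbf{X},\boldsymbol{\mathcal{E}}_{1},\dots,\boldsymbol{\mathcal{E}}_{L-2}$, not on $\mathbf{W}_{L-1}$ or $\mathbf{W}_{L}$. Put $\mathbf{K}\triangleq\mathbf{A}_{L-1}^{p}\circ\mathbf{V}_{L-2}^{p}\in\mathbb{R}^{d_{L-1}d_{L-2}\times N}$, so that the $i$-th row block ($i=1,\dots,d_{L-1}$) of its $n$-th column is $a_{i,L-1}^{(n)}\mathbf{v}_{L-2}^{(n)}$. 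Since $\boldsymbol{a}_{L}=1$ we have $\boldsymbol{\delta}_{L-1}^{(n)}=\mathrm{diag}(\boldsymbol{a}_{L-1}^{(n)})\mathbf{w}_{L}$, and by the mixed-product property the gradient matrix of eq.~\eqref{eq: Gme=00003D0} factors as $\mathbf{G}_{L-1}=\mathrm{diag}(\mathbf{w}_{L}\otimes\mathbf{1}_{d_{L-2}})\,\mathbf{K}$; hence the DLM stationarity condition $\mathbf{G}_{L-1}\mathbf{e}=0$ already forces the $i$-th row block of $\mathbf{K}\mathbf{e}$ to vanish for every $i$ with $(\mathbf{w}_{L})_{i}\neq 0$. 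The goal is to upgrade this to $\mathbf{K}\mathbf{e}=0$ and to prove $\mathrm{rank}(\mathbf{K})=N$ almost everywhere, which then forces $\mathbf{e}=0$, i.e.\ $\mathrm{MSE}=0$, a global minimum.

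\emph{Handling the vanishing coordinates of $\mathbf{w}_{L}$.} For indices $i$ with $(\mathbf{w}_{L})_{i}=0$ the first-order condition is vacuous, so here I would invoke $\mathbf{H}\succeq 0$ at the DLM. Reading off eqs.~\eqref{eq: Hessian-1}--\eqref{eq: Lambda-1}: once $(\mathbf{w}_{L})_{i}=0$ and the slopes are frozen, $\mathbf{e}$ is affine in the $i$-th row of $\mathbf{W}_{L-1}$ with zero gradient there, so for any single entry $W_{ij,L-1}$ the $2\times 2$ principal submatrix of $\mathbf{H}$ on the pair $\{(\mathbf{w}_{L})_{i},\,W_{ij,L-1}\}$ has diagonal $\bigl(\tfrac1N\sum_{n}(v_{i,L-1}^{(n)})^{2},\,0\bigr)$ and off-diagonal entry $\tfrac1N$ times the $j$-th component of the $i$-th row block of $\mathbf{K}\mathbf{e}$ (the contribution of $\tfrac1N\mathbf{G}_{L}\mathbf{G}_{L-1}^{\top}$ to this off-diagonal carries a factor $(\mathbf{w}_{L})_{i}=0$). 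A positive semidefinite $2\times 2$ matrix with a zero diagonal entry has vanishing off-diagonal, so the $i$-th row block of $\mathbf{K}\mathbf{e}$ vanishes for these $i$ as well. Combining the two cases gives $\mathbf{K}\mathbf{e}=0$.

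\emph{Full column rank of $\mathbf{K}$.} Fix a pattern $p$. Under $p$ every $a_{i,l}^{(n)}$ is a fixed nonzero multiple ($1$ or $s$, with $s\neq 0$) of $\epsilon_{i,l}^{(n)}$, and $\mathbf{v}_{L-2}^{(n)}=\bigl(\prod_{m=1}^{L-2}\mathrm{diag}(\boldsymbol{a}_{m}^{(n)})\mathbf{W}_{m}\bigr)\mathbf{x}^{(n)}$, so the entries of $\mathbf{A}_{L-1}^{p}\circ\mathbf{V}_{L-2}^{p}$ are polynomials in $\boldsymbol{\theta}\triangleq(\mathbf{X},\boldsymbol{\mathcal{E}}_{1},\dots,\boldsymbol{\mathcal{E}}_{L-1},\mathbf{W}_{1},\dots,\mathbf{W}_{L-2})$. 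Since $d_{L-1}d_{L-2}\geq N$, Lemma~\ref{lem: polynomial proof} reduces the claim $\mathrm{rank}=N$ for $\boldsymbol{\theta}$-a.e.\ to producing one $\boldsymbol{\theta}_{0}$ at which the rank is $N$. For this: by Fact~\ref{fact:Khatri-Rao multiplies rank} with $N\leq d_{L-1}d_{L-2}$, pick matrices $\mathbf{A}\in\mathbb{R}^{d_{L-1}\times N}$ and $\mathbf{V}\in\mathbb{R}^{d_{L-2}\times N}$ with all-nonzero entries and $\mathrm{rank}(\mathbf{A}\circ\mathbf{V})=N$; then choose $\mathbf{W}_{1},\dots,\mathbf{W}_{L-3},\mathbf{X}$ (and $\boldsymbol{\mathcal{E}}_{1},\dots,\boldsymbol{\mathcal{E}}_{L-3}$) generically so that all pre-activations through layer $L-3$ and the vectors $\mathbf{W}_{L-2}\mathbf{v}_{L-3}^{(n)}$ are entrywise nonzero, and finally solve entrywise for $\boldsymbol{\mathcal{E}}_{L-2}$ so that $\mathbf{V}_{L-2}^{p}=\mathbf{V}$ and for $\boldsymbol{\mathcal{E}}_{L-1}$ so that $\mathbf{A}_{L-1}^{p}=\mathbf{A}$ (legitimate since the relevant denominators are nonzero, and $\boldsymbol{\mathcal{E}}_{L-1}$ does not affect $\mathbf{V}_{L-2}$). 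Lemma~\ref{lem: polynomial proof} then gives $\mathrm{rank}(\mathbf{A}_{L-1}^{p}\circ\mathbf{V}_{L-2}^{p})=N$ for $\boldsymbol{\theta}$-a.e.; intersecting over the finitely many patterns $p$ by Fact~\ref{fact:intersect-a.e.} yields a full-measure set of $\boldsymbol{\theta}$ on which $\mathrm{rank}(\mathbf{K})=N$ for the pattern actually realized, whence $\mathbf{e}=0$. (For $L=2$ this recovers Theorem~\ref{thm: committee machine}; the content is new for $L\geq 3$.)

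\emph{Main obstacle.} I expect the delicate step to be precisely the case $(\mathbf{w}_{L})_{i}=0$: one cannot follow the $L=2$ template of absorbing $\mathbf{W}_{L}$ into $\mathbf{W}_{L-1}$ (the linear change of variables $\tilde{\mathbf{W}}_{L-1}=\mathrm{diag}(\mathbf{w}_{L})\mathbf{W}_{L-1}$ is not locally onto at those rows), so the fix genuinely needs the positive semidefiniteness of the Hessian — this is where ``differentiable local minimum'' is used beyond ``differentiable critical point'' (recall $\mathbf{w}=0$ is a critical point with $\mathbf{H}=0$ yet, for $L\geq 3$, not a local minimum). The secondary point to be careful with is verifying that the deep, multiplicative map $\boldsymbol{\theta}\mapsto\mathbf{V}_{L-2}$ can in fact be steered onto a prescribed generic matrix, so that Lemma~\ref{lem: polynomial proof} has a legitimate anchor point $\boldsymbol{\theta}_{0}$; this works because the last lower dropout layer $\boldsymbol{\mathcal{E}}_{L-2}$ enters as a free entrywise rescaling of $\mathbf{W}_{L-2}\mathbf{v}_{L-3}$, but it should be written out explicitly.
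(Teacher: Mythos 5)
Your proposal is correct, and its skeleton coincides with the paper's: reduce everything to showing that $\mathbf{A}_{L-1}\circ\mathbf{V}_{L-2}$ has full column rank $(\mathbf{X},\boldsymbol{\mathcal{E}}_{1},\dots,\boldsymbol{\mathcal{E}}_{L-1},{\bf W}_{1},\dots,{\bf W}_{L-2})$-a.e., by fixing an activation pattern, invoking the polynomial-rank Lemma \ref{lem: polynomial proof} with a single full-rank anchor point, and intersecting over the finitely many patterns via Fact \ref{fact:intersect-a.e.}. You deviate in two places, both legitimately. First, the paper simply sets $\mathbf{w}_{L}=\mathbf{1}$ ``without loss of generality,'' absorbing it into $\mathbf{W}_{L-1}$ exactly as in the single-hidden-layer reduction; you instead keep $\mathbf{w}_{L}$, note $\mathbf{G}_{L-1}=\mathrm{diag}(\mathbf{w}_{L}\otimes\mathbf{1}_{d_{L-2}})(\mathbf{A}_{L-1}\circ\mathbf{V}_{L-2})$, and handle rows with $(\mathbf{w}_{L})_{i}=0$ through positive semidefiniteness of the $2\times2$ minor of the Hessian in the pair $\{(\mathbf{w}_{L})_{i},W_{ij,L-1}\}$ (your computation of that minor from eqs. \eqref{eq: Hessian-1}--\eqref{eq: Lambda-1} is right, and PSD with a zero diagonal entry does kill the off-diagonal). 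This is a valid alternative; in fact it makes explicit the second-order information that the paper's terse ``absorb $\mathbf{w}_{L}$'' step implicitly uses — a DLM of the original parametrization does yield the full gradient equations of the reduced one even at zero rows, via a perturbation of the form $(\mathbf{w}_{L})_{i}\to t$, $(\mathbf{W}_{L-1})_{i\cdot}\to(\mathbf{W}_{L-1})_{i\cdot}+t\mathbf{m}^{\top}$, which is the curve version of your $2\times2$ argument — so the claim that the paper's route ``cannot'' be followed is too strong, though your worry is well placed; note also that your Hessian formulation tacitly assumes twice differentiability at the DLM, the same assumption the paper makes in its Hessian discussion. Second, for the anchor point the paper writes down a fully explicit $(\mathbf{X}',\boldsymbol{\mathcal{E}}',{\bf W}')$ (all-ones inputs, rank-one weight matrices, and slices of $\mathbf{I}\otimes\mathbf{1}$) yielding $\mathbf{G}_{L-1}'^{p}=[\mathbf{I}_{d_{L-2}d_{L-1}}]_{1,\dots,N}$, which has the side benefit of re-proving Fact \ref{fact:Khatri-Rao multiplies rank} when $L=2$; you instead import Fact \ref{fact:Khatri-Rao multiplies rank} to get target matrices $\mathbf{A},\mathbf{V}$ and steer $\mathbf{V}_{L-2},\mathbf{A}_{L-1}$ onto them by solving entrywise for $\boldsymbol{\mathcal{E}}_{L-2},\boldsymbol{\mathcal{E}}_{L-1}$. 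That works (the needed nonvanishing of the pre-activations and of $\mathbf{W}_{L-2}\mathbf{v}_{L-3}^{(n)}$ is witnessed, e.g., by the paper's all-ones choice, so ``generic'' is non-vacuous), but it is less self-contained since it leans on the cited Allman result rather than producing the anchor directly.
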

\begin{proof}
Without loss of generality, assume $\mathbf{w}_{L}=\boldsymbol{1}$,
since we can absorb the weights of the last layer into the $L-1$
weight layer, as we did in single hidden layer case (eq. \eqref{eq:MSE 2-layer - reduced}).
Fix an activation pattern $p\in\left\{ 1,...,P\right\} $ as defined
in the beginning of this appendix. Set
\begin{equation}
{\bf v}_{L-2}^{p\left(n\right)}\triangleq\left(\prod_{m=1}^{l}\mathrm{diag}\left(\boldsymbol{a}_{m}^{p\left(n\right)}\right)\mathbf{W}_{m}\right)\mathbf{x}^{\left(n\right)}\,,\,{\bf V}_{L-2}^{p}\triangleq\left[{\bf v}_{L-2}^{p\left(1\right)},\dots,{\bf v}_{L-2}^{p\left(N\right)}\right]\in\mathbb{R}^{d_{L-2}\times N}\,
\end{equation}
and
\begin{equation}
{\bf G}_{L-1}^{p}\triangleq\mathbf{A}_{L-1}^{p}\circ{\bf V}_{L-2}^{p}
\end{equation}
Note that, since the activation pattern is fixed, the entries of ${\bf G}_{L-1}^{p}$
are polynomials in the entries of $(\mathbf{X},\boldsymbol{\mathcal{E}}_{1},\dots,\boldsymbol{\mathcal{E}}_{L-1},{\bf W}_{1},...,{\bf W}_{L-2})$,
and we may therefore apply Lemma \ref{lem: polynomial proof} to ${\bf G}_{L-1}^{p}$.
Thus, to establish $\mathrm{rank}({\bf G}_{L-1}^{p})=N$ $(\mathbf{X},\boldsymbol{\mathcal{E}}_{1},\dots,\boldsymbol{\mathcal{E}}_{L-1},{\bf W}_{1},...,{\bf W}_{L-2})$-a.e.,
we only need to exhibit a single $(\mathbf{X^{\prime}},\boldsymbol{\mathcal{E}}_{1}^{\prime},\dots,\boldsymbol{\mathcal{E}}_{L-1}^{\prime},{\bf W}_{1}^{\prime},...,{\bf W}_{L-2}^{\prime})$
for which $\mathrm{rank}({\bf G}_{L-1}^{\prime p})=N$. We note that
for a fixed activation pattern, we can obtain any value of $({\bf A}_{1}^{p},...,{\bf A}_{L-1}^{p})$
with some choice of $(\boldsymbol{\mathcal{E}}_{1}^{\prime},\dots,\boldsymbol{\mathcal{E}}_{L-1}^{\prime})$,
so we will specify $({\bf A}_{1}^{\prime p},...,{\bf A}_{L-1}^{\prime p})$
directly. We make the following choices:
\begin{gather}
x_{i}^{\prime\left(n\right)}=1\:\forall i,n\ ,\ a_{i,l}^{\prime p\left(n\right)}=1\,\forall l<L-2\ ,\forall i,n\\
{\bf W}_{l}^{\prime}=\left[{\bf 1}_{d_{l}\times1}\,,\,{\bf 0}_{d_{l}\times\left(d_{l-1}-1\right)}\right]\ \forall l\leq L-2\\
{\bf A}_{L-2}^{\prime p}=\left[{\bf 1}_{1\times d_{L-1}}\otimes\mathbf{I}_{d_{L-2}}\right]_{1,...,N}\:,\ {\bf A}_{L-1}^{\prime p}=\left[\mathbf{I}_{d_{L-1}}\otimes{\bf 1}_{1\times d_{L-2}}\right]_{1,...,N}
\end{gather}
where ${\bf 1}_{a\times b}$ (respectively, ${\bf 0}_{a\times b}$)
denotes an all ones (zeros) matrix of dimensions $a\times b$, $\mathbf{I}_{a}$
denotes the $a\times a$ identity matrix, and $[{\bf M}]_{1,...,N}$
denotes a matrix composed of the first $N$ columns of ${\bf M}$.
It is easy to verify that with this choice, we have $\mathbf{W}_{L-2}^{\prime}\left(\prod_{m=1}^{L-3}\mathrm{diag}\left(\boldsymbol{a}_{m}^{\prime p\left(n\right)}\right)\mathbf{W}_{m}^{\prime}\right)\mathbf{x}^{\prime\left(n\right)}={\bf 1}_{d_{L-2}\times1}$
for any $n$, and so ${\bf V}_{L-2}^{\prime p}={\bf A}_{L-2}^{\prime p}$
and
\begin{equation}
{\bf G}_{L-1}^{\prime p}=\mathbf{A}_{L-1}^{\prime p}\circ{\bf A}_{L-2}^{\prime p}=\left[\mathbf{I}_{d_{L-2}d_{L-1}}\right]_{1,...,N}
\end{equation}
which obviously satisfies $\mathrm{rank}({\bf G}_{L-1}^{\prime p})=N$.
We conclude that $\mathrm{rank}({\bf G}_{L-1}^{p})=N$, $(\mathbf{X},\boldsymbol{\mathcal{E}}_{1},\dots,\boldsymbol{\mathcal{E}}_{L-1},{\bf W}_{1},...,{\bf W}_{L-2})$-a.e.,
and remark this argument proves Fact \ref{fact:Khatri-Rao multiplies rank},
if we specialize to $L=2$.

As we did in the proof of Lemma \ref{lem:  when G full rank?}, we
apply the above argument for all values of $p$, and conclude via
Fact \ref{fact:intersect-a.e.} that $\mathrm{rank}({\bf G}_{L-1}^{p})=N$
for every $p$, $(\mathbf{X},\boldsymbol{\mathcal{E}}_{1},\dots,\boldsymbol{\mathcal{E}}_{L-1},{\bf W}_{1},...,{\bf W}_{L-2})$-a.e..
Since for every ${\bf w}$, ${\bf G}_{L-1}={\bf G}_{L-1}^{p}$ for
some $p$ which depends on ${\bf w}$, this implies that, $(\mathbf{X},\boldsymbol{\mathcal{E}}_{1},\dots,\boldsymbol{\mathcal{E}}_{L-1},{\bf W}_{1},...,{\bf W}_{L-2})$-a.e.,
$\mathrm{rank}({\bf G}_{L-1})=N$ simultaneously for all values of
${\bf W}_{L-1}$. Thus in any DLM of the MSE, with all weights except
$\mathbf{W}_{L-1}$ fixed, we can use eq. \ref{eq: Gme=00003D0} ($\mathbf{G}_{L-1}\mathbf{e}=0$),
and get $\mathbf{e}=0$.
\end{proof}

\end{document}